\documentclass{article}

\PassOptionsToPackage{numbers, compress}{natbib}
\usepackage[main, final]{neurips_2026}

\usepackage{graphicx}
\usepackage{microtype}
\usepackage{graphicx}
\usepackage{subcaption}
\usepackage{booktabs}

\usepackage{algorithm}
\usepackage{algorithmic}

\usepackage{amsmath}
\usepackage{amssymb}
\usepackage{mathtools}
\usepackage{amsthm}
\usepackage{pifont}
\usepackage{nccmath}

\theoremstyle{plain}
\newtheorem{theorem}{Theorem}[section]
\newtheorem{proposition}[theorem]{Proposition}

\theoremstyle{definition}

\theoremstyle{remark}
\newtheorem{remark}[theorem]{Remark}

\usepackage{multirow} 

\usepackage[utf8]{inputenc} 
\usepackage[T1]{fontenc}    
\usepackage{hyperref}       
\usepackage{url}            
\usepackage{booktabs}       
\usepackage{amsfonts}       
\usepackage{nicefrac}       
\usepackage{microtype}      
\usepackage{xcolor}         

\title{Explainable Novel Category Discovery in Semantic Concept Space}

\author{
  Ifrat Ikhtear Uddin$^{1}$,
  Yang Zhou$^{2}$,
   KC Santosh$^{1}$,
   Longwei Wang$^{1\dagger}$,
  \\
  \texttt{ifratikhtear.uddin@coyotes.usd.edu, longwei.wang@usd.edu,} \\
  \texttt{yangzhou@auburn.edu, kc.santosh@usd.edu} \\
  \\
  $^{1}$Department of Computer Science, University of South Dakota, USA \\
  $^{2}$Department of Computer Science and Software Engineering, Auburn University, USA
}

\begin{document}

\renewcommand{\thefootnote}{\fnsymbol{footnote}}
\footnotetext[0]{$\dagger$ Corresponding Authors.}
\renewcommand{\thefootnote}{\arabic{footnote}}

\maketitle

\begin{abstract}
Novel category discovery aims to identify unseen classes from unlabeled data by transferring knowledge from labeled categories, but most existing methods perform discovery in opaque latent feature spaces. As a result, they may separate novel categories accurately while providing little insight into what semantic evidence defines each discovered group. 
We propose \textbf{xNCD}, an explainable novel category discovery framework that performs both representation-based discovery and pseudo-label assignment directly in a structured semantic concept space. Instead of clustering arbitrary deep features, xNCD learns a label-free concept representation by aligning visual features with vision-language similarity priors from pretrained multimodal models, and then applies a unified labeled-and-unlabeled self-labeling objective over concept-space logits. This design makes each discovered category explainable by construction through stable concept signatures and instance-level concept evidence. 
Theoretically, we show that routing discovery through a semantic concept bottleneck induces a strict restriction of the feature-space hypothesis class, excluding a large family of unconstrained decision rules and biasing induced partitions toward semantically interpretable concept coordinates. 
Experiments on CIFAR-10, CIFAR-100, and CUB-200 demonstrate that xNCD preserves strong discovery performance while providing intrinsic explanations. Under task-agnostic evaluation, xNCD achieves 92.63\% overall accuracy on CIFAR-10, close to UNO’s 93.4\%, and improves CIFAR-100 overall accuracy from 73.2\% to 76.45\%, while being the only compared method that provides human-readable cluster- and instance-level explanations. 
\end{abstract}



\section{Introduction}

Novel Category Discovery addresses the problem of identifying previously unseen semantic categories from unlabeled data while leveraging supervision from a set of known categories \cite{joseph2022novel,han2021autonovel}. This setting naturally arises in realistic deployments, where labeled data are limited to a subset of categories and new categories emerge over time. Recent advances in representation learning and clustering \cite{wang2019representation,wang2014congestion,wang2021explaining,wang2011exploration,shi2019deep,wang2021improving,xiao2022looking,wang2019layer,wang2024dense,nayyem2024bridging,wang2025explainability,ranabhat2025multi,uddin2025expert,rasmussen2025ecologically,santosh2025toward,wang2026expert,wang2025explainability,zhang2026acting,wang2025bridging,rasmussen2026channel,wall2026winsor,ranabhat2025promoting,khadka2025coswin,wang2025explainability2} have led to strong empirical performance on NCD benchmarks \cite{fini2021unified,han2019learning, wang2024semantic}, demonstrating that high-quality embeddings can effectively separate known and novel categories.

Despite these successes, existing NCD methods largely operate in latent feature spaces whose dimensions lack explicit semantic meaning \cite{samek2021explaining}. As a result, while novel categories may be correctly separated, the reason why a group of samples forms a distinct category remains opaque. This limitation makes it difficult to validate discovered categories, diagnose failure modes, or integrate human knowledge into the discovery process. Such opacity is particularly problematic in scientific, safety-critical, and human-in-the-loop applications \cite{rudin2019stop,lipton2018mythos,doshi2017towards}, where interpretability and accountability are essential.

In this work, we argue that semantic interpretability should be an intrinsic property of the novel category discovery process, rather than a post hoc analysis applied after clustering. To this end, we propose an interpretable framework for NCD that performs representation learning and category discovery directly in a structured semantic concept space \cite{koh2020concept,yuksekgonul2022post}. Instead of relying on abstract embeddings, the framework learns a set of concept detectors whose activations form an explicit, human-readable intermediate representation for both labeled and unlabeled data.

A key challenge in adopting concept-based representations is the lack of concept  annotations at scale \cite{kim2018interpretability}. To address this, we used a label-free concept projection learning strategy \cite{oikarinen2023label} that aligns learned concept activations with vision–language similarity priors derived from pretrained multimodal models \cite{radford2021learning}. This enables the framework to acquire semantically 
meaningful concept detectors without requiring manual concept supervision. Novel category 
discovery is then performed through a unified cross-entropy objective operating directly 
in concept space.


Experiments on CIFAR-10, CIFAR-100, and CUB-200 demonstrate competitive 
discovery performance while providing intrinsic, human-readable explanations 
for discovered categories. Our key contributions are:

\begin{itemize}
    \item We introduce a novel category discovery framework that performs representation learning and category discovery directly in a structured semantic concept space. By mediating discovery through concept representations rather than latent embeddings, the framework produces novel categories that are intrinsically interpretable, with each category characterized by an explicit and human-readable concept signature.

    \item We formulate novel category discovery as a unified optimization problem over concept representations. Extensive experiments on CIFAR-10, CIFAR-100, and CUB-200 demonstrate competitive discovery performance while yielding stable and semantically meaningful explanations.
    \item 
     We theoretically show that introducing a semantic concept bottleneck induces a strict restriction on the hypothesis space of novel category discovery models, eliminating semantically entangled feature-space decision rules and constraining the induced partitions of unlabeled data to be expressible through human-interpretable concept coordinates.
\end{itemize}

\section{Related Works}
\label{sec:ncd}
Novel Category Discovery (NCD) addresses the problem of identifying unseen 
categories in unlabeled data while leveraging knowledge from labeled base 
categories. Early approaches KCL~\citep{hsu2017learning} and MCL~\citep{hsu2019multi} employed pairwise similarity learning but struggled with scalability. DTC~\citep{han2019learning} introduced deep transfer clustering, while RS+~\citep{han2020automatically} combined ranking statistics with pseudo-labeling. UNO~\citep{fini2021unified} introduced a unified cross-entropy objective treating labeled and pseudo-labels homogeneously, with follow-up work extending NCD to generalized~\citep{vaze2022generalized}, open-world~\citep{cao2021open}, and continual learning~\citep{joseph2022novel} settings. Despite strong clustering accuracy, all existing NCD methods operate in opaque feature spaces where discovered clusters carry no semantic meaning, preventing domain experts from validating discovered groupings.

While GCD~\citep{vaze2022generalized} represents a more general formulation, our 
task-agnostic evaluation is directly comparable at test time, as both require joint 
classification without oracle task identity. The key distinction is in training: GCD 
mixes known samples into the unlabeled pool, while xNCD operates in the more controlled 
NCD setting where interpretability constraints are easier to enforce. Notably, GCD and 
subsequent methods~\citep{wen2023parametric} rely on ViT backbones pretrained with 
DINO on ImageNet, whereas xNCD uses ResNet-18/50 pretrained only on the labeled split.

Concept Bottleneck Models (CBMs)~\citep{koh2020concept} introduce interpretability 
by routing predictions through human-understandable concept neurons, enabling 
transparent reasoning. Label-free CBM~\citep{oikarinen2023label} removes the need 
for manual annotations by aligning concepts with vision-language priors. However, 
CBMs have been studied exclusively in closed-world settings with known categories. 
\textbf{This work is the first to extend concept bottleneck architectures to 
open-world learning}, performing discovery in interpretable concept space rather 
than opaque features, and extending concept reasoning to scenarios where target 
categories are unknown.

\begin{figure*}[ht]
    \centering
    \includegraphics[width=0.85\linewidth]{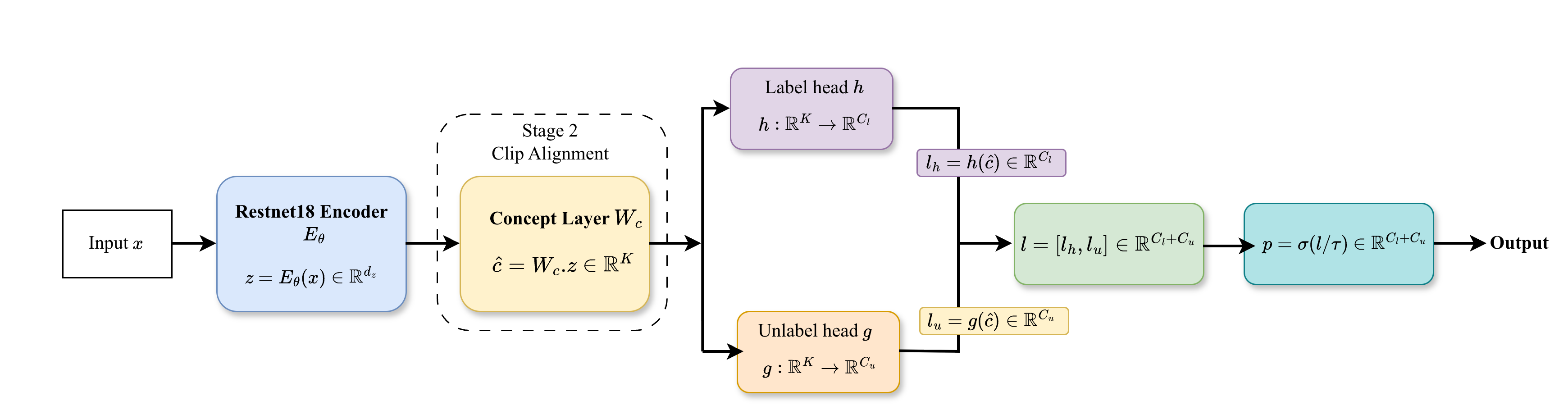}
        \caption{An input image $\mathbf{x}$ is processed by a pretrained encoder $E_\theta$ to produce features $\mathbf{z} \in \mathbb{R}^{d_z}$. The concept projection layer $\mathbf{W}_c$ maps these features to interpretable concept activations $\hat{\mathbf{c}} \in \mathbb{R}^K$, where each dimension corresponds to a human-understandable attribute. Two classification heads operate on concept activations: a labeled head $h$ for known categories and an unlabeled head $g$ for novel categories discovery. Logits are concatenated and passed through a unified softmax over all $C_l + C_u$ categories.} 

    \label{fig:architecture}
\end{figure*}

\begin{figure*}[ht]
    \centering
    \includegraphics[width=0.8\linewidth]{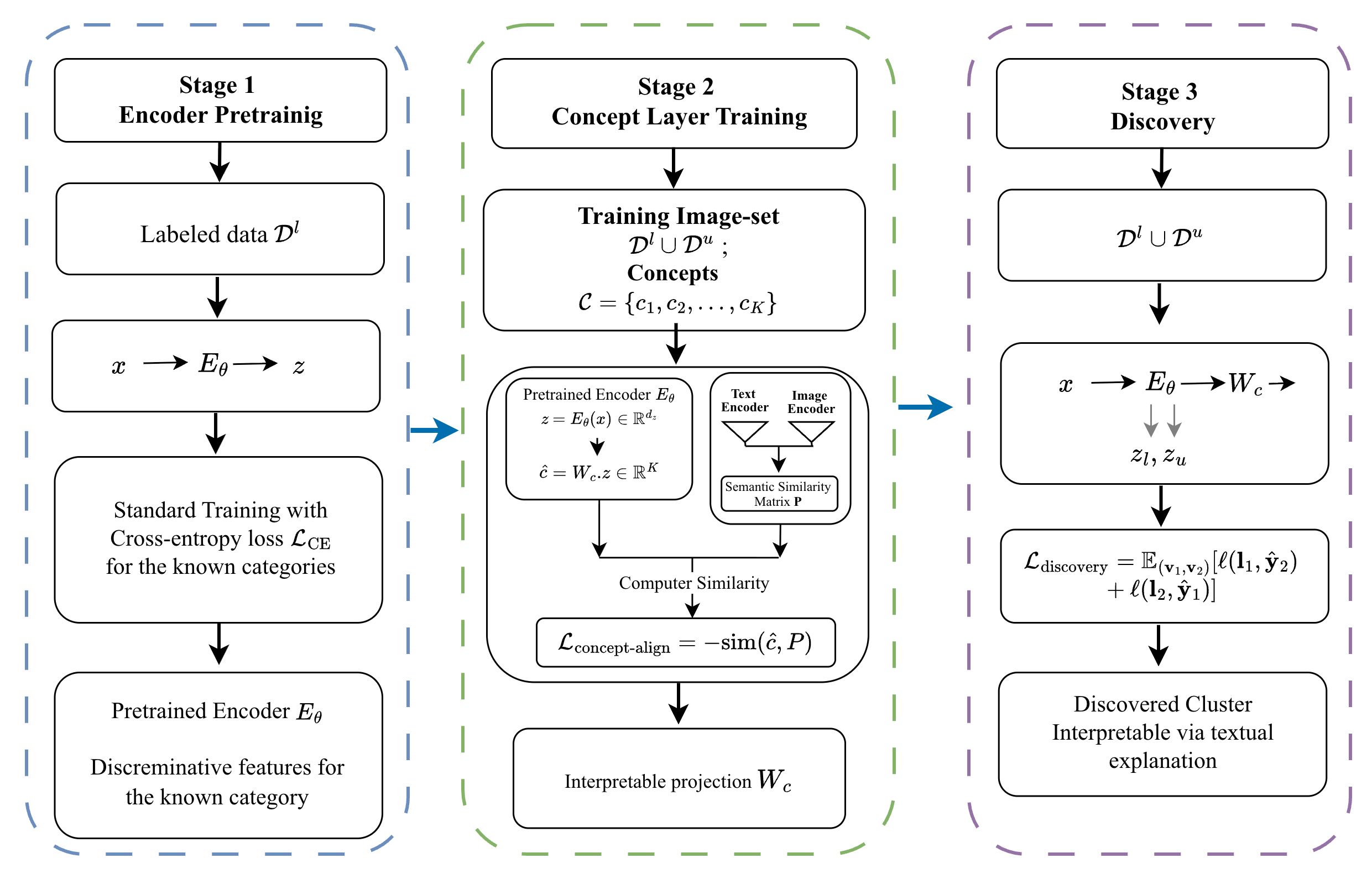}
\caption{Overview of the xNCD framework. \textbf{Stage 1}: Encoder pretraining 
on labeled data with cross-entropy loss. \textbf{Stage 2}: Concept projection 
$\mathbf{W}_c$ is trained on all images by aligning concept activations with 
CLIP image-text similarities no concept annotations required. \textbf{Stage 3}: 
Novel categories are discovered by clustering in concept space, yielding 
interpretable clusters characterized by human-readable concept profiles 
(e.g., ``four-legged, furry, wide mouth'' for a dog cluster).}

    \label{fig:pipeline}
\end{figure*}

\section{Methodology}

\subsection{Problem Formulation}

In Novel Category Discovery (NCD), we are given a labeled dataset $\mathcal{D}^l = \{(\mathbf{x}_i^l, y_i^l)\}_{i=1}^{N_l}$ containing images from $C_l$ known categories, and an unlabeled dataset $\mathcal{D}^u = \{\mathbf{x}_j^u\}_{j=1}^{N_u}$ containing images from $C_u$ novel categories. The known and novel categories sets are disjoint. The goal is to discover and cluster the novel categories in $\mathcal{D}^u$ by transferring knowledge learned from $\mathcal{D}^l$. Here we assume number of novel category is a known prior \citep{fini2021unified, vaze2022generalized, fei2022xcon}, as estimating number of novel category for NCD or GCD is considered a separate problem \cite{troisemaine2023novel,han2021autonovel}.

Standard NCD methods discover clusters in high-dimensional feature spaces $\mathbb{R}^{d_z}$, producing opaque groupings that offer no insight into \textit{why} samples belong together. 
We argue that interpretability and discovery are complementary: if we can represent images through human-understandable concepts, we can both cluster effectively \emph{and} explain what distinguishes each discovered category. To this end, we introduce \textbf{Explainable Novel Category Discovery (xNCD)}, which performs discovery in an interpretable concept space $\mathcal{C} = \{c_1, c_2, \ldots, c_K\}$, where each $c_j$ corresponds to a human-understandable visual attribute (e.g., ``striped,'' ``four-legged,'' ``metallic''). 

By representing images through $K$ human-understandable concepts rather than $d_z$-dimensional latent features (where $K < d_z$), we constrain the hypothesis space to partitions expressible through interpretable semantic coordinates (formalized in Section \ref{forml-theory-proof}). The key insight is that such concepts generalize across both known and novel categories: we can describe an unknown animal using attributes like ``fur,'' ``tail,'' and ``four legs'' without knowing its species name. This semantic grounding ensures that discovered clusters correspond to coherent conceptual groupings rather than arbitrary feature correlations.

\subsection{Stage 1: Encoder Pretraining on Labeled categories}

 We pretrain a shared encoder $E_\theta: \mathbb{R}^d \rightarrow \mathbb{R}^{d_z}$ on the labeled dataset $\mathcal{D}^l$ to learn discriminative features for known categories. The encoder outputs feature vectors $\mathbf{z} = E_\theta(\mathbf{x}) \in \mathbb{R}^{d_z}$, which are fed to a linear classifier $h: \mathbb{R}^{d_z} \rightarrow \mathbb{R}^{C_l}$ with $C_l$ output neurons. The model is trained using standard cross-entropy loss:
\begin{equation}
\mathcal{L}_{\text{pretrain}} = -\frac{1}{N_l} \sum_{i=1}^{N_l} \sum_{c=1}^{C_l} \mathbf{y}_i^l[c] \log p_c^l(\mathbf{x}_i^l)
\end{equation}
where $p^l = \sigma(h(E_\theta(\mathbf{x}))/\tau)$ is the softmax output with temperature $\tau$, and $\mathbf{y}^l$ is the one-hot encoded label.

\subsection{Stage 2: Concept Layer Learning}

The second stage learns to project encoder features into an interpretable concept space. The challenge is learning this projection \emph{without} manual concept annotations for each image. We address this by leveraging CLIP as a concept supervisor.

\textbf{CLIP as Concept Supervisor.}
For each image $\mathbf{x}_i$ and concept $c_j$, we compute a target activation:
\begin{equation}
P_{ij} = E^{\text{img}}_{\text{CLIP}}(\mathbf{x}_i) \cdot E^{\text{txt}}_{\text{CLIP}}(c_j),
\end{equation}
where $E^{\text{img}}_{\text{CLIP}}$ and $E^{\text{txt}}_{\text{CLIP}}$ are CLIP's image and text encoders. The matrix $\mathbf{P} \in \mathbb{R}^{N \times K}$ provides dense supervision: $P_{ij}$ is high when concept $c_j$ is visually present in image $\mathbf{x}_i$.

\textbf{Concept Projection Layer.} We learn a projection matrix $\mathbf{W}_c \in 
\mathbb{R}^{K \times d_z}$ that maps encoder features to concept activations, 
following the label-free approach of~\citet{oikarinen2023label}:

\begin{equation}
\hat{\mathbf{c}} = \mathbf{W}_c E_\theta(\mathbf{x}) = \mathbf{W}_c \mathbf{z},
\end{equation}
where $\hat{\mathbf{c}} \in \mathbb{R}^K$ represents predicted concept activations. Each row of $\mathbf{W}_c$ acts as a ``concept detector'' that identifies the corresponding visual attribute.

\textbf{Training Objective.} We train $\mathbf{W}_c$ to align predicted concept activations with CLIP's assessments. For concept $j$, let $\mathbf{q}_j = [\hat{c}_j(\mathbf{x}_1), \ldots, \hat{c}_j(\mathbf{x}_N)]^\top \in \mathbb{R}^N$ be the activation pattern across all images. We maximize the similarity between $\mathbf{q}_j$ and the corresponding column of the CLIP matrix $\mathbf{P}_{:,j}$ using a cubed cosine similarity:
\begin{equation}
\label{eq:cubed_cosine}
    \text{sim}(\mathbf{q}_j, \mathbf{P}_{:,j}) = \frac{\bar{\mathbf{q}}_j^3 \cdot \bar{\mathbf{P}}_{:,j}^3}{\|\bar{\mathbf{q}}_j^3\|_2 \|\bar{\mathbf{P}}_{:,j}^3\|_2},
\end{equation}
where $\bar{\mathbf{v}}$ denotes mean-centered and standardized $\mathbf{v}$, and the cube is applied element-wise. Standard cosine similarity weights all samples equally, but for concept alignment, we care most about images where the concept is clearly present. The cubing operation amplifies high-activation samples where CLIP confidently detects the concept, while suppressing noise from ambiguous cases, leading to sharper concept detectors, more details are in Appendix~\ref{app:sim-details}.

The alignment loss over all concepts is:
\begin{equation}
    \mathcal{L}_{\text{align}}(\mathbf{W}_c) = \sum_{j=1}^{K} -\text{sim}(\mathbf{q}_j, \mathbf{P}_{:,j}).
\end{equation}


After training, we retain only concepts achieving 
$\text{sim}(\mathbf{q}_j, \mathbf{P}_{:,j}) > \delta_{\text{align}}$ on validation data, 
filtering out poorly-aligned concepts. We use $\delta_{\text{align}} = 0.35$ in our 
experiments. We also compute per-concept normalization statistics $(\mu_j, \sigma_j)$ 
for use in Stage 3.

\subsection{Stage 3: Concept-based Discovery}

The discovery stage performs unified learning on both labeled and unlabeled 
data, with all computations occurring in concept space rather than the 
original feature space. This is the key architectural modification that 
enables interpretability.




\textbf{Architecture.} Our architecture (Figure \ref{fig:architecture}) routes all predictions through the concept bottleneck layer. Given an input image $\mathbf{x}$, we compute:
\begin{align}
\mathbf{z} &= E_\theta(\mathbf{x}) \in \mathbb{R}^{d_z}, \\
\tilde{\mathbf{c}} &= \frac{\mathbf{W}_c \mathbf{z} - \boldsymbol{\mu}}{\boldsymbol{\sigma}} \in \mathbb{R}^K,
\end{align}
where $\boldsymbol{\mu}, \boldsymbol{\sigma} \in \mathbb{R}^K$ are per-concept normalization statistics from Stage 2. 
The normalized concept activations $\tilde{\mathbf{c}}$ are processed by two types of classification heads operating in concept space. A labeled head $h:\mathbb{R}^K \rightarrow \mathbb{R}^{C_l}$, implemented as a linear classifier, is responsible for predicting known categories. In parallel, a set of unlabeled heads ${g_n}_{n=1}^N$, each mapping $\mathbb{R}^K \rightarrow \mathbb{R}^{C_u}$ and implemented as an MLP-based clustering module, is used to discover novel categories.



We employ multiple clustering heads to prevent convergence to local optima during discovery. During training, for each batch, we iterate over clustering heads and compute logits by concatenating the labeled head with each clustering head independently:
\begin{equation}
\label{eq:head-eq}
\mathbf{l}_n = [h(\tilde{\mathbf{c}}), g_n(\tilde{\mathbf{c}})] \in \mathbb{R}^{C_l+C_u}, \quad n = 1, \ldots, N.
\end{equation}
The logits are passed through a shared softmax over all $C = C_l + C_u$ categories, enabling joint reasoning about known and novel categories. At test time, we use the head with lowest training loss in the final epoch.

\textbf{Training Objective.} We adopt multi-view self-labeling with swapped prediction~\citep{caron2020unsupervised, 
fini2021unified}, where pseudo-labels from one augmented view supervise the other, 
encouraging transformation-invariant assignments. Crucially, all computations occur in concept space: pseudo-labels are computed from concept-space logits, ensuring that cluster assignments emerge from semantic concept similarities rather than opaque feature correlations.

The unified cross-entropy loss treats labeled supervision and pseudo-labels 
homogeneously:
\begin{equation}
    \mathcal{L}_{\text{discovery}} = 
    \mathbb{E}_{(\mathbf{v}_1, \mathbf{v}_2)}
    [\ell(\mathbf{l}_1, \hat{\mathbf{y}}_2) + \ell(\mathbf{l}_2, \hat{\mathbf{y}}_1)]
\end{equation}
where $\mathbf{l}_1, \mathbf{l}_2$ are concept-space logits from views $\mathbf{v}_1, 
\mathbf{v}_2$, and $\hat{\mathbf{y}}_1, \hat{\mathbf{y}}_2$ are their swapped targets. 
Labeled targets are ground-truth labels padded with zeros; unlabeled pseudo-labels 
are generated via Sinkhorn-Knopp~\citep{cuturi2013sinkhorn} applied to concept-space 
logits (see Appendix~\ref{app:pseudolabels}). Each discovered cluster $i$ is 
characterized by a semantic signature of its top-$r$ activated and deactivated 
concepts ($r{=}5$); formal definitions are in Appendix~\ref{app:explanations}.

\section{Concept Bottlenecks as Hypothesis Space Restriction for Novel Category Discovery}
\label{forml-theory-proof}
We formalize explainable novel category discovery as a problem of learning a classifier whose induced partition over unlabeled data is constrained to admit a meaningful semantic interpretation. Let $\mathcal{X}\subset\mathbb{R}^d$ denote the input space, and let $C = C_l + C_u$ be the total number of categories, consisting of $C_l$ known categories with labels and $C_u$ unknown (novel) categories to be discovered from unlabeled data. In this setting, a learned predictor implicitly defines a partition of the unlabeled dataset $\mathcal{D}_u$ through its decision rule, typically via pseudo-labeling or clustering assignments. 

Most existing novel category discovery (NCD) methods operate in a learned feature space and construct predictors of the form
\begin{equation}
f_{\text{feat}}(x) = \mathrm{softmax}(W \,\phi_\theta(x)),
\end{equation}
where $\phi_\theta:\mathcal{X}\rightarrow\mathbb{R}^{d_z}$ is a learned feature extractor (e.g., a deep convolutional encoder) and $W\in\mathbb{R}^{C\times d_z}$ is a linear classifier whose rows correspond to known and novel category logits. We denote the corresponding hypothesis class by
\begin{equation}
\mathcal{H}_{\text{feat}} = \{x \mapsto \mathrm{softmax}(W\,\phi_\theta(x)) : \theta, W\}.
\end{equation}
This hypothesis class is highly expressive, enabling strong empirical performance in separating known and unknown categories. However, its expressiveness also permits a vast family of decision boundaries and induced partitions that depend on arbitrary geometric correlations in feature space. As a result, clusters discovered under $\mathcal{H}_{\text{feat}}$ need not correspond to coherent or human-understandable semantic attributes, limiting interpretability and trust in open-world deployments.

\paragraph{Concept-Space Formulation.}
Our approach introduces an explicit semantic bottleneck that mediates all predictions through a structured concept space. Specifically, we define a concept map $c_\psi:\mathcal{X}\rightarrow\mathbb{R}^K$ with $K < d_z$, given by
\begin{equation}
c_\psi(x) = \tilde{c}(x) = \frac{W_c \phi_\theta(x) - \mu}{\sigma},
\end{equation}
where $W_c\in\mathbb{R}^{K\times d_z}$ is a learnable concept projection matrix, and $(\mu,\sigma)$ are per-concept normalization statistics estimated during concept training. Each coordinate of $c_\psi(x)$ corresponds to the activation of a human-understandable semantic concept (e.g., visual attributes or contextual properties).

Classification and novel category discovery are then performed entirely in concept space:
\begin{equation}
f_{\text{concept}}(x) = \mathrm{softmax}(A\,c_\psi(x)),
\end{equation}
where $A\in\mathbb{R}^{C\times K}$ aggregates both the labeled-category head and the novel-category discovery head(s). The resulting hypothesis class is
\begin{equation}
\mathcal{H}_{\text{concept}} = \{x \mapsto \mathrm{softmax}(A\,c_\psi(x)) : \theta,\psi,A\}.
\end{equation}
By construction, every prediction and pseudo-label assignment is a function of explicit concept activations, ensuring that the induced partition over $\mathcal{D}_u$ is mediated by semantically interpretable coordinates.

\paragraph{Hypothesis Space Restriction.}
We now show that introducing a concept bottleneck does not merely reparameterize feature-space NCD, but instead induces a strict restriction on the underlying hypothesis space. This restriction eliminates a large family of decision rules that rely on arbitrary feature correlations and enforces that all predictions factor through a low-dimensional, semantically structured subspace.

Recall that feature-space NCD models admit predictors of the form
\[
f_{\text{feat}}(x) = \mathrm{softmax}(W\phi_\theta(x)),
\]
where $W\in\mathbb{R}^{C\times d_z}$ is unconstrained. In contrast, concept-mediated models enforce a two-stage factorization
\[
\phi_\theta(x) \;\longrightarrow\; c_\psi(x) \;\longrightarrow\; f_{\text{concept}}(x),
\]
where all category logits are linear combinations of concept activations. We formalize this distinction below.

\begin{proposition}
\label{prop:subset}
Assume $c_\psi(x)$ is linear in $\phi_\theta(x)$, i.e., $c_\psi(x)=W_c\phi_\theta(x)$ (ignoring normalization for clarity). Then
\begin{equation}
\mathcal{H}_{\text{concept}} \subseteq \mathcal{H}_{\text{feat}}.
\end{equation}
Moreover, if $K < \min\{C,d_z\}$, the inclusion is strict.
\end{proposition}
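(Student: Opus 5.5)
The plan has two steps: show the inclusion by absorbing the concept projection into the classifier, then show strictness with a rank argument for a fixed feature extractor.

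\textbf{Inclusion.} Take any $f\in\mathcal{H}_{\text{concept}}$, so $f(x)=\mathrm{softmax}(A W_c \phi_\theta(x))$ for some $\theta, W_c, A$. Set $W := A W_c\in\mathbb{R}^{C\times d_z}$. Then $f(x)=\mathrm{softmax}(W\phi_\theta(x))$ with the same $\theta$, so $f\in\mathcal{H}_{\text{feat}}$. The key observation is that composing linear maps stays linear. The only constraint the bottleneck adds is that the effective weight matrix has the form $AW_c$, hence $\mathrm{rank}(W)\le K$.

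\textbf{Strictness: the witness.} I would compare the two classes with the feature extractor $\phi_\theta$ held fixed; I return to this choice below. Since $K<\min\{C,d_z\}$, there is a matrix $W^\star\in\mathbb{R}^{C\times d_z}$ of rank $\min\{C,d_z\}>K$, for example one whose top-left block is the identity. I claim $x\mapsto\mathrm{softmax}(W^\star\phi_\theta(x))$ is not in $\mathcal{H}_{\text{concept}}$. Suppose $\mathrm{softmax}(W^\star\phi_\theta(x))=\mathrm{softmax}(AW_c\phi_\theta(x))$ for all $x$. Softmax is invariant only to adding a constant vector $t(x)\mathbf{1}$, so the logits satisfy
\[
\bigl(W^\star - AW_c\bigr)\phi_\theta(x)\in\mathrm{span}\{\mathbf{1}\}\quad\text{for all }x.
\]
Assume the features $\phi_\theta(\mathcal{X})$ span $\mathbb{R}^{d_z}$. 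Then $W^\star - AW_c = \mathbf{1}v^\top$ for some vector $v$. This gives
\[
\mathrm{rank}(W^\star)\le \mathrm{rank}(AW_c)+1\le K+1.
\]

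\textbf{Main obstacle: the softmax shift.} This extra $+1$ is the main difficulty. I would handle it by choosing $W^\star$ so that $\mathrm{rank}(PW^\star)>K$, where $P=I-\tfrac1C\mathbf{1}\mathbf{1}^\top$ projects out the $\mathbf{1}$ direction. The quantity $\mathrm{rank}(PW^\star)$ is the shift-invariant rank, and it can be as large as $\min\{C-1,d_z\}$.
\begin{itemize}
\item If $K<\min\{C-1,d_z\}$, any $W^\star$ with $PW^\star$ of full rank works. A contradiction follows because $\mathrm{rank}(PAW_c)\le K$ while $PW^\star=PAW_c$.
\item In the edge case $K=C-1\le d_z-1$, this argument fails, since $K$ concepts already realize every shift-equivalence class. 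Strictness then cannot come from the rank of $W$ alone. I would either state the result for non-normalized logits, or rely on the fact that $W_c$ and $A$ are shared with the labeled head and are fixed by Stage 2. I would flag this as a caveat: as literally stated, the strict inclusion needs $K<C-1$, or the comparison must be made at the level of logits.
\end{itemize}

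\textbf{Fixing the extractor.} I also need to justify comparing the classes with $\phi_\theta$ fixed. If $\theta$ ranges over all encoders, a sufficiently rich family could make the two classes coincide, because the encoder could absorb the bottleneck. So I would make explicit that both classes are indexed by a common encoder family and the claim is per-$\theta$. The spanning condition on $\phi_\theta(\mathcal{X})$ then holds generically.
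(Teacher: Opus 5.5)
Your inclusion step is the same as the paper's ($W'=AW_c$), and your strictness argument uses the same rank bound. The paper, however, applies that bound only to parameters. It exhibits $W$ with $\mathrm{rank}(W)>K$, notes that $W\neq AW_c$ for every $A,W_c$, and concludes that ``these classifiers are excluded.'' That inference treats a predictor as if it were its weight matrix. Distinct parameters can define the same function: $\mathrm{softmax}$ ignores shifts by $t(x)\mathbf{1}$, and the exclusion also needs non-degenerate features.

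Your function-level argument (spanning features, then $PW^\star=PAW_c$) closes that gap, and it also shows the stated hypothesis is off by one. Suppose $K=C-1<d_z$. For every $W$ one has $\mathrm{softmax}(W\phi_\theta(x))=\mathrm{softmax}(PW\phi_\theta(x))$. Since $PW$ has rank at most $C-1=K$, it factors as $AW_c$ with $A\in\mathbb{R}^{C\times K}$. Hence $\mathcal{H}_{\text{concept}}=\mathcal{H}_{\text{feat}}$, and the proposition as written is false in that case.

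Your caveat is therefore a correction to the statement, not a weakness of your proof. The right condition for softmax predictors is $K<\min\{C-1,d_z\}$. The condition $K<\min\{C,d_z\}$ is correct only if hypotheses are compared as logit maps, which is the reading under which the paper's parameter-level argument is essentially valid. Drop the alternative of appealing to $W_c$ and $A$ being fixed by Stage~2: that changes the hypothesis classes rather than proving the claim.

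The one step to revise is your paragraph on fixing the extractor. The worry that a rich encoder family could absorb the bottleneck is backwards, because the bottleneck sits after the encoder. Take any concept predictor $f$, built from any $\theta'$. Its centered log-probabilities $P\log f(x)=PAW_c\phi_{\theta'}(x)$ all lie in $\mathrm{col}(PA)$, whose dimension is at most $K$, and this set is determined by $f$ alone. For your witness, the vectors $PW^\star\phi_\theta(x)$ span $\mathrm{col}(PW^\star)$, whose dimension is $\min\{C-1,d_z\}>K$.

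So the witness lies outside $\mathcal{H}_{\text{concept}}$ whatever encoder the concept model uses. Strictness then holds for the classes as actually defined (unions over $\theta$), assuming only that some encoder in the family has features spanning $\mathbb{R}^{d_z}$. There is no need to weaken the claim to a fixed $\theta$.
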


\noindent\textbf{Proof.}
For any predictor $f_{\text{concept}}\in\mathcal{H}_{\text{concept}}$, the logits satisfy
\[
A\,c_\psi(x) = A W_c \phi_\theta(x) = W' \phi_\theta(x),
\]
where $W' = A W_c \in \mathbb{R}^{C\times d_z}$. Therefore, $f_{\text{concept}}$ can be written as a feature-space classifier and belongs to $\mathcal{H}_{\text{feat}}$, establishing $\mathcal{H}_{\text{concept}} \subseteq \mathcal{H}_{\text{feat}}$.

However, $W'$ is constrained to factor through the $K$-dimensional concept space. In particular, $\mathrm{rank}(W') \le K$, since it is the product of a $C\times K$ matrix and a $K\times d_z$ matrix. When $K < \min\{C,d_z\}$, there exist weight matrices in $\mathbb{R}^{C\times d_z}$ whose rank exceeds $K$ and which therefore cannot be expressed in this factorized form. Consequently, these classifiers are excluded from $\mathcal{H}_{\text{concept}}$, implying strict containment. \hfill$\square$

Beyond dimensional restriction, our framework further constrains 
$\mathcal{H}_{\text{concept}}$ by enforcing alignment between concept 
activations and vision-language similarity priors during training. This 
alignment restricts the admissible concept maps $c_\psi$ to a semantically 
anchored subset, preventing arbitrary rotations or drift of the bottleneck 
that could otherwise occur under unsupervised discovery objectives.

\begin{remark}
When $K < \min\{C, d_z\}$, the rank constraint eliminates all 
$W' \in \mathbb{R}^{C \times d_z}$ with $\mathrm{rank}(W') > K$, 
strictly reducing the hypothesis class. In our experimental settings 
$K > C$ for all datasets (CIFAR-10: $K{=}131$, $C{=}10$; CIFAR-100: 
$K{=}422$, $C{=}100$; CUB-200: $K{=}453$, $C{=}200$), so 
$\mathrm{rank}(W') \leq C < K$ holds trivially and the rank argument 
provides no additional restriction. CLIP alignment then constitutes 
the sole operative interpretability guarantee, constraining admissible 
concept maps to a semantically anchored subset of 
$\mathbb{R}^{K \times d_z}$ independently of the $K$-vs-$C$ relationship.
\end{remark}

Table~\ref{tab:concept_ablation} (Appendix~\ref{app:ablation}) provides 
empirical evidence of this trade-off: reducing concept vocabulary size $K$ 
degrades clustering accuracy, confirming that hypothesis space restriction 
and discovery performance are directly coupled.

\section{Experiments and Results}
\label{sec:exp-res}
\subsection{Experimental Setup}

We evaluate xNCD on CIFAR-10, CIFAR-100~\citep{krizhevsky2009learning}, and 
CUB-200~\citep{wah2011caltech}, split into disjoint labeled and unlabeled sets 
following standard NCD protocols (5+5, 80+20, and 170+30 respectively). We use 
ResNet-18 for CIFAR-10/100 and ResNet-50 for CUB-200; full training details are 
in Appendix~\ref{app:impl-details}. We report labeled accuracy, novel category 
clustering accuracy via Hungarian matching~\citep{kuhn1955hungarian}, overall 
accuracy, NMI, and ARI, under both task-aware and task-agnostic protocols. The 
task-agnostic setting, which jointly discriminates over all $C_l + C_u$ categories 
without oracle task identity, is directly comparable to the GCD evaluation 
protocol~\citep{vaze2022generalized}. Dataset statistics and concept generation 
details are in Appendix~\ref{app:dataset-stats} and~\ref{app:concept-generation}.

\subsection{Clustering Performance}

\begin{figure} [htb]
    \centering
    \includegraphics[width=0.6\linewidth]{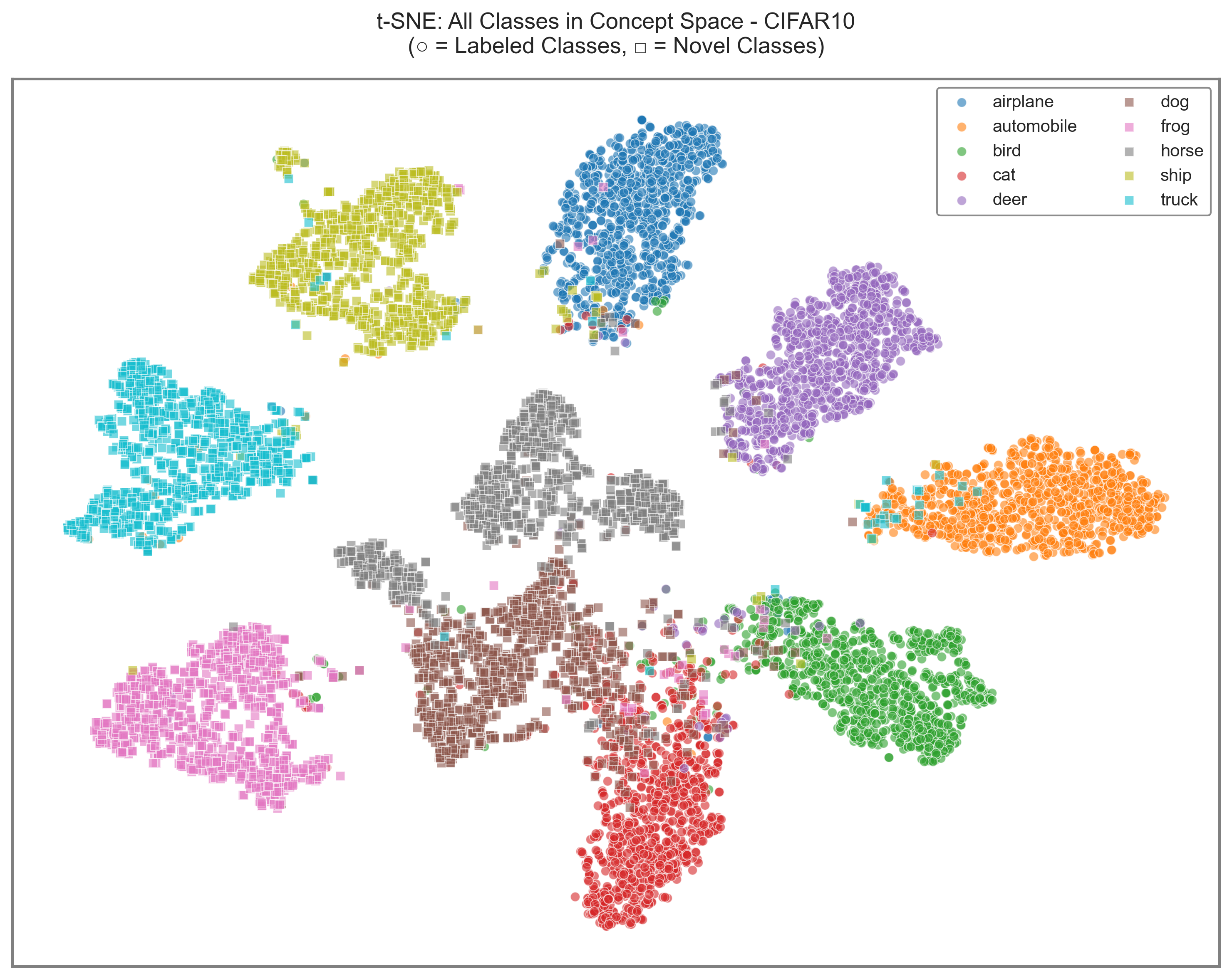}
 \caption{t-SNE visualization of concept space on CIFAR-10. Clear separation between 
all 10 categories (5 labeled, 5 novel) demonstrates that concept space preserves 
discriminative structure. Semantically similar categories (e.g., dog/cat, ship/airplane) 
cluster near each other.}

    \label{fig:tsne}
\end{figure}

Table~\ref{tab:clustering_results} presents the quantitative results of xNCD across three benchmarks under both evaluation protocols.

\textbf{CIFAR-10.} xNCD achieves 95.28\% overall under task-aware and 92.63\% under 
task-agnostic evaluation (Table~\ref{tab:clustering_results}). The modest drop between protocols 
indicates effective separation of known from novel categories. Figure~\ref{fig:tsne} 
shows clear concept-space separation across all 10 categories, with semantically 
similar categories (e.g., dog/cat, ship/airplane) clustering near each other.

\begin{table}[htb]
\centering

\caption{Clustering performance on CIFAR-10, CIFAR-100, and CUB-200 following the split mentioned in Table~\ref{tab:datasets}.}
\label{tab:clustering_results}
\resizebox{0.7\columnwidth}{!}{%
\begin{tabular}{llccc}
\toprule
\textbf{Dataset} & \textbf{Evaluation} & \textbf{Lab (\%)} & \textbf{Unlab (\%)} & \textbf{All (\%)} \\
\midrule
\multirow{2}{*}{\textbf{CIFAR-10}} & Task-aware & 97.46$\pm$0.2 & 93.10$\pm$0.4 & 95.28$\pm$0.3 \\
                                    & Task-agnostic & 94.36$\pm$0.3 & 90.90$\pm$0.5 & 92.63$\pm$0.4 \\
\midrule
\multirow{2}{*}{\textbf{CIFAR-100}} & Task-aware & 79.25$\pm$0.5 & 84.10$\pm$0.7 & 80.22$\pm$0.5 \\
                                     & Task-agnostic & 76.78$\pm$0.4 & 75.15$\pm$0.6 & 76.45$\pm$0.5 \\
\midrule
\multirow{2}{*}{\textbf{CUB-200}} & Task-aware & 70.88$\pm$0.3 & 50.85$\pm$0.6 & 67.88$\pm$0.4 \\
                                   & Task-agnostic & 68.30$\pm$0.4 & 50.28$\pm$0.7 & 65.59$\pm$0.5 \\
\bottomrule
\end{tabular}%
}
\end{table}

\textbf{CIFAR-100.} xNCD achieves 80.22\% overall (task-aware) and 76.45\% 
(task-agnostic). Notably, novel class accuracy (84.10\%) exceeds labeled accuracy 
(79.25\%) under task-aware evaluation, suggesting the concept bottleneck provides 
strong inductive bias for semantically coherent clustering.

\textbf{CUB-200.} xNCD achieves 67.88\% overall (task-aware) and 65.59\% 
(task-agnostic). The lower novel category accuracy (50.85\%) reflects the challenge 
of fine-grained distinction, where subtle visual differences may not be fully captured 
by a general concept vocabulary.



We also evaluate clustering quality using NMI and ARI metrics. xNCD achieves NMI of 
86.34\% and ARI of 85.11\% on CIFAR-10, indicating semantically coherent clusters. 
Detailed ablation studies on concept vocabulary size and clustering metrics are provided 
in Appendix~\ref{app:ablation}.

\subsection{Comparison with State-of-the-Art}

\begin{table}[ht]
\centering
\caption{Comparison with state-of-the-art methods on CIFAR-10 and CIFAR-100 using task-agnostic evaluation. We report accuracy on labeled categories (Lab), clustering accuracy on novel categories (Unlab), and overall accuracy (All). xNCD is the only method providing human-interpretable explanations for discovered clusters.}
\label{tab:comparison}
\resizebox{0.8\columnwidth}{!}{%
\begin{tabular}{lccccccc}
\toprule
\multirow{2}{*}{Method} & \multicolumn{3}{c}{CIFAR-10} & \multicolumn{3}{c}{CIFAR-100} & \multirow{2}{*}{Interpretable} \\
\cmidrule(lr){2-4} \cmidrule(lr){5-7}
& Lab & Unlab & All & Lab & Unlab & All & \\
\midrule
KCL~\citep{hsu2017learning} & 79.4 & 60.1 & 69.8 & 23.4 & 29.4 & 24.6 & \ding{55} \\
MCL~\citep{hsu2019multi} & 81.4 & 64.8 & 73.1 & 18.2 & 18.0 & 18.2 & \ding{55} \\
DTC~\citep{han2019learning} & 58.7 & 78.6 & 68.7 & 47.6 & 49.1 & 47.9 & \ding{55} \\
RS+~\citep{han2020automatically} & 90.6 & 88.8 & 89.7 & 71.2 & 56.8 & 68.3 & \ding{55} \\
UNO~\citep{fini2021unified} & 93.5 & 93.3 & 93.4 & 73.2 & 73.1 & 73.2 & \ding{55} \\
SNCD~\citep{wang2024semantic} & 95.8 & 92.7 & 94.3 & 79.9 & 79.2 & 79.5 & \ding{55} \\
GCD~\citep{vaze2022generalized} & 97.9 & 88.5 & 91.5 & 76.2 & 66.5 & 73.0 & \ding{55} \\
\midrule
\textbf{xNCD (Ours)} & \textbf{94.36} & \textbf{90.90} & \textbf{92.63} & \textbf{76.78} & \textbf{75.15} & \textbf{76.45} & \ding{51} \\
\bottomrule
\end{tabular}%
}
\end{table}

Table~\ref{tab:comparison} compares xNCD against KCL~\citep{hsu2017learning}, 
MCL~\citep{hsu2019multi}, DTC~\citep{han2019learning}, RS+~\citep{han2020automatically}, 
UNO~\citep{fini2021unified}, SNCD~\citep{wang2024semantic}, and GCD~\citep{vaze2022generalized} 
under task-agnostic evaluation. On CIFAR-10, xNCD achieves 92.63\% overall accuracy, 
competitive with UNO (93.4\%) and GCD (91.5\%), trailing SNCD (94.3\%) by 1.67\%. 
On CIFAR-100, xNCD outperforms UNO (73.2\%) and GCD (73.0\%) by over 3 points, 
remaining competitive with SNCD (79.5\%). Critically, xNCD is the only compared 
method providing human-interpretable cluster- and instance-level explanations a 
capability that no accuracy metric can capture, yet is essential for validating 
discovered categories in scientific and safety-critical applications. The modest 
accuracy gap relative to non-interpretable methods confirms that routing discovery 
through a concept bottleneck does not significantly compromise clustering quality. 
Extended discussion is in Appendix~\ref{app:discussion}.


\begin{figure*}[htb]
    \centering
    \includegraphics[width=0.49\linewidth]{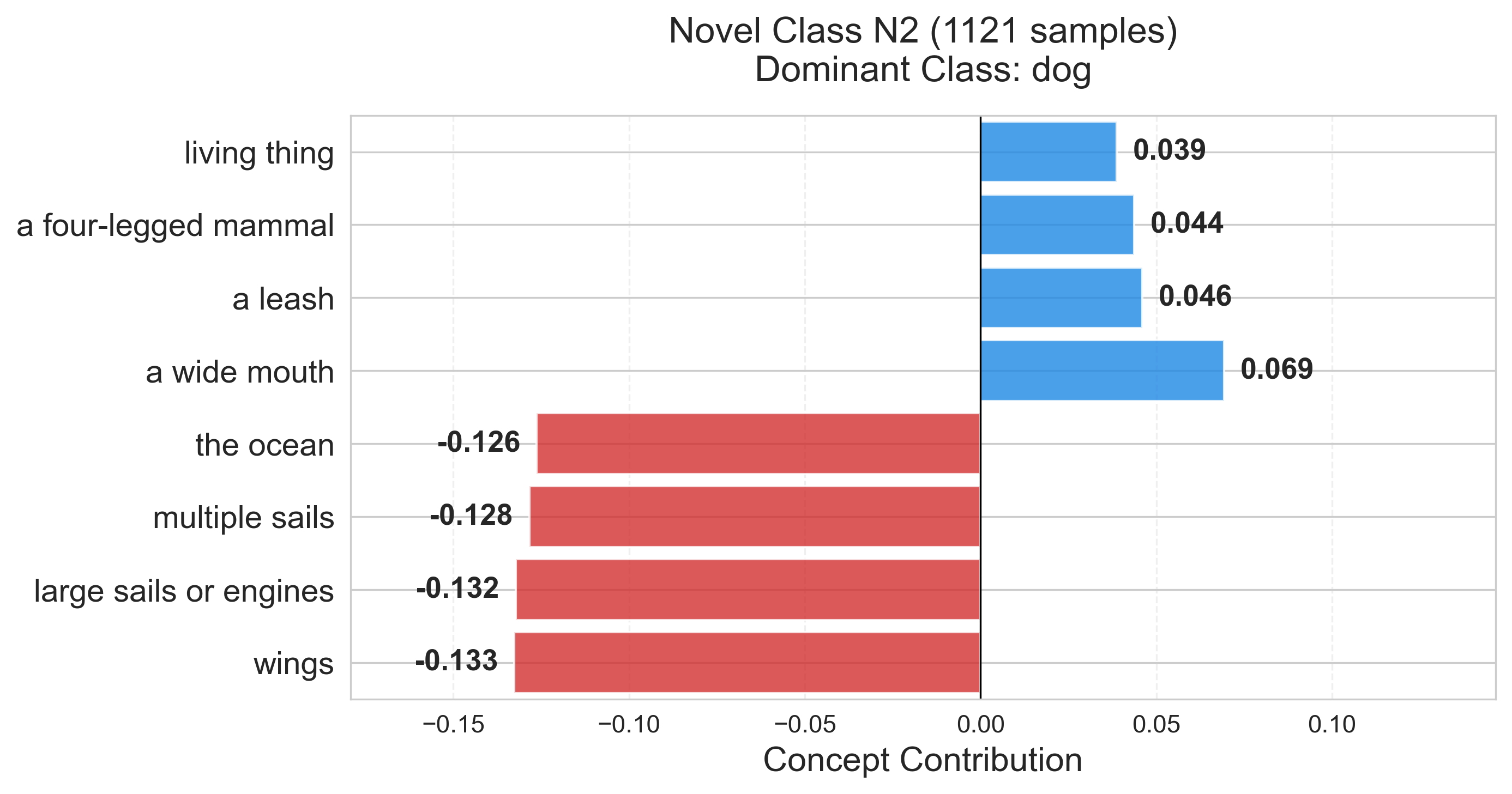}
    \includegraphics[width=0.49\linewidth]{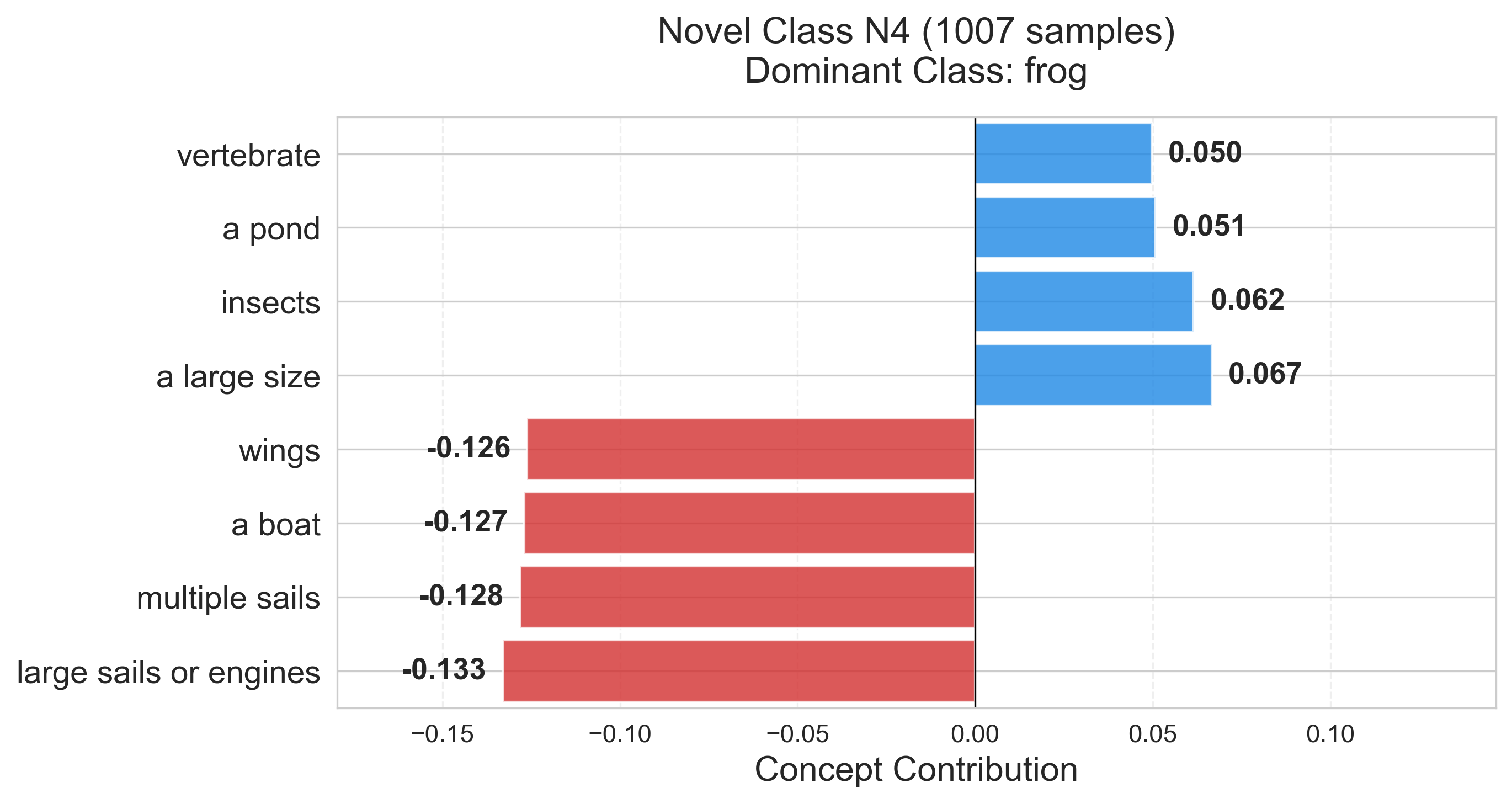}

\caption{Concept activation profiles for discovered novel categories on 
CIFAR-10. The dog cluster (left) activates \textit{living thing, four-legged 
mammal, wide mouth} while deactivating \textit{ocean, wings}, correctly 
distinguishing terrestrial mammals from vehicles and birds. The frog 
cluster (right) activates \textit{vertebrate, pond} while deactivating 
boat-related concepts.}
    \label{fig:concept_profiles}
\end{figure*}

\begin{figure*} [htb]
    \centering

   \includegraphics[width=0.25\linewidth]{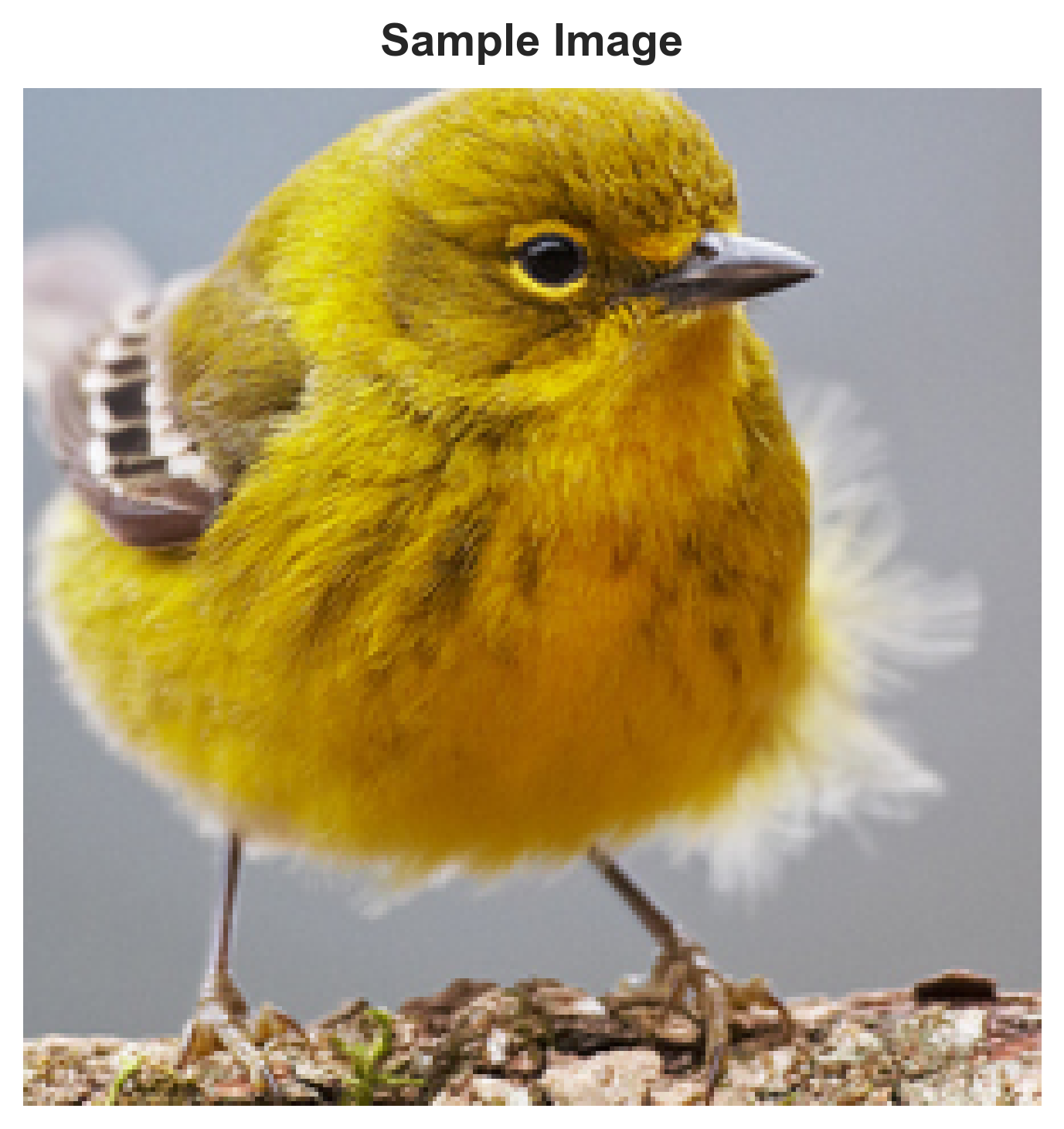}
    \includegraphics[width=0.52\linewidth]{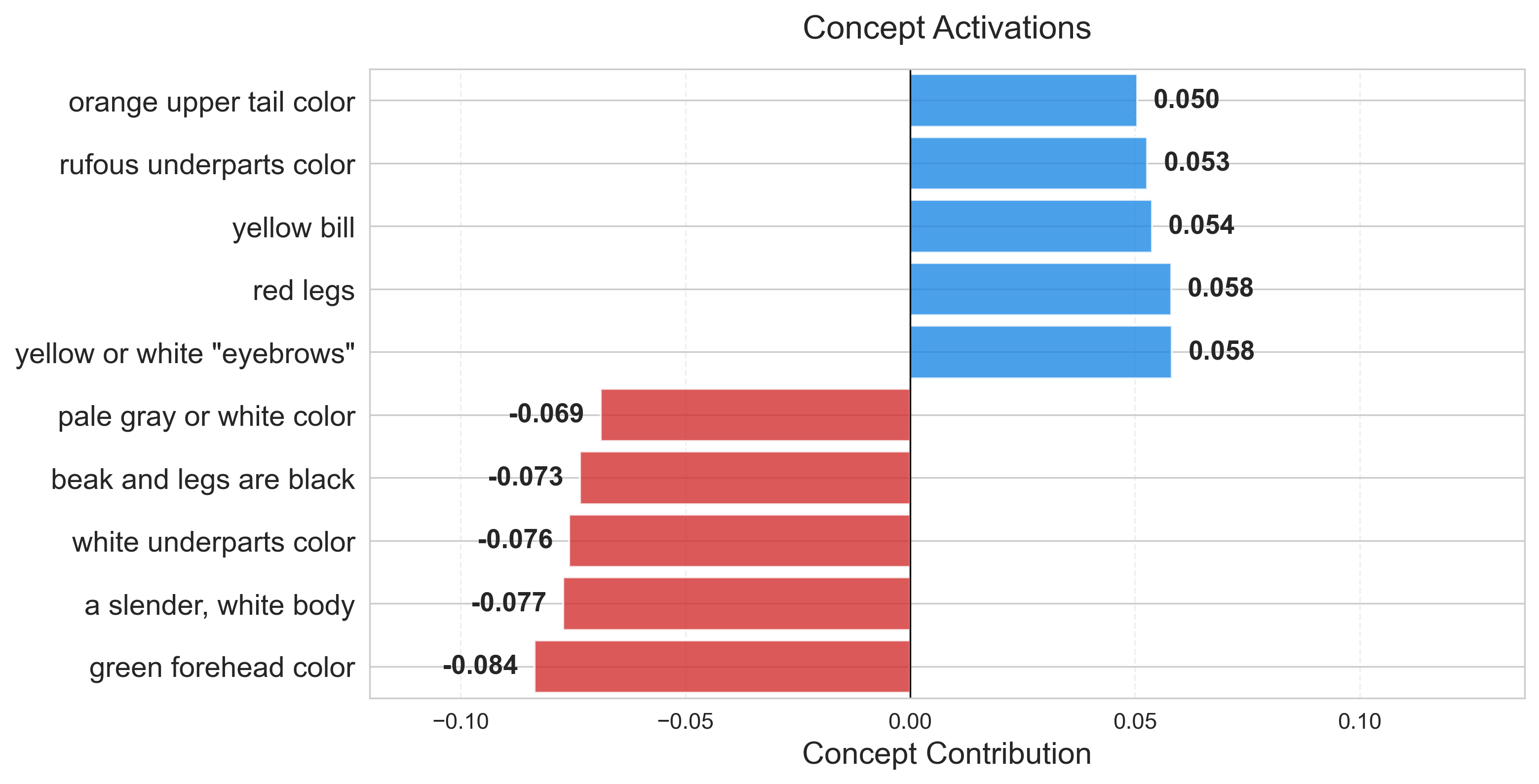}

\caption{Instance-level concept attribution for a CUB-200 bird image. 
Activated concepts (blue) such as \textit{yellow bill} and \textit{yellow eyeball}
provide positive evidence, while suppressed concepts (red) such as 
\textit{green forehead} and \textit{pale gray color} indicate absent attributes. 
Such explanations enable domain experts to validate discovered categories 
by understanding not just \textit{what} the model predicts but 
\textit{why}.}

    \label{fig:f-instance-level}
\end{figure*}

\subsection{Interpretability Analysis}

\textbf{Cluster-Level Concept Signatures.} Figure~\ref{fig:concept_profiles} visualizes 
concept profiles for discovered CIFAR-10 clusters. The dog cluster is characterized by 
strong positive activations for \textit{living thing}, \textit{four-legged mammal}, 
\textit{wide mouth} while strongly deactivating \textit{ocean}, \textit{wings}, 
\textit{sails}. These semantic signatures are highly informative: positive activations 
describe what the cluster \textit{is}, while negative activations clarify what it is 
\textit{not}. The dog cluster's deactivation of ocean and wing concepts confirms the 
model correctly distinguishes terrestrial mammals from vehicles and birds.

\textbf{Instance-Level Explanations.} Figure~\ref{fig:f-instance-level} shows 
explanations for a CUB-200 bird image. The model highlights \textit{yellow bill}, 
\textit{rufous underparts}, \textit{orange upper tail} as positive evidence while 
suppressing \textit{green forehead}, \textit{white underparts}. These instance-level 
explanations enable practitioners to understand not only \textit{what} the model predicts 
but \textit{why} a capability critical for deploying NCD in scientific domains where 
discovered categories must be validated by domain experts. 
Additional visualizations are in 
Appendix~\ref{app:vis}.

\section{Conclusion}

We presented Explainable Novel Category Discovery (xNCD), a framework that performs novel category discovery directly in a semantic concept space, replacing opaque feature-space clustering with intrinsically interpretable discovery. By routing all predictions and pseudo-labels through a concept bottleneck learned via label-free alignment with vision–language priors, xNCD produces novel categories that admit concise, human-understandable concept signatures.
We provide a theoretical analysis showing that concept bottlenecks impose a strict restriction on the hypothesis space of discovery models, eliminating semantically entangled partitions while preserving expressive power. Empirical results on CIFAR-10, CIFAR-100, and CUB-200 demonstrate that xNCD achieves competitive or improved discovery performance relative to state-of-the-art methods, while uniquely enabling transparent cluster- and instance-level explanations.

\bibliographystyle{unsrtnat}
\bibliography{bibtext}

\newpage
\appendix

\section{Concept Set Generation.} 
\label{app:concept-generation}
For all dataset we generated concepts following~\citet{oikarinen2023label}, we generate initial concept sets using GPT~\citep{brown2020language} with three prompt types: (i) important features for recognizing each category, (ii) things commonly seen around each category, and (iii) superclasses. We apply filtering to remove: concepts longer than 30 characters, concepts too similar to category names (cosine similarity $>0.85$), duplicate concepts (similarity $>0.9$), concepts not present in training data, and concepts with poor projection accuracy (CLIP similarity $<0.35$). The final concept counts are shown in Table~\ref{tab:datasets}.

CUB-200 dataset comes with expert annotated 312 concepts for all images and we generated 369 more concepts following the procedure described above and combined them together to have a total of 671 concepts. We have ran experiment on how our xNCD works with different concept sizes more details are in \ref{app:cp-size}

\section{Algorithm}
\label{app:algo}

\subsection{Stage 1: Encoder Pretraining}
\begin{algorithmic}[1]
\FOR{epoch $= 1$ to $200$}
    \FOR{$(\mathbf{x}, y) \in \mathcal{D}^l$}
        \STATE $\mathbf{z} \gets E_\theta(\mathbf{x})$
        \STATE $\mathbf{p}^l \gets \sigma(h(\mathbf{z})/\tau)$
        \STATE $\mathcal{L} \gets -\sum_{c} y_c \log p_c^l$
        \STATE Update $\theta$ via SGD
    \ENDFOR
\ENDFOR
\STATE Save pretrained encoder $E_\theta$
\end{algorithmic}

\subsection{Stage 2: Concept Layer Learning}
\begin{algorithmic}[1]
\STATE Load pretrained encoder $E_\theta$ (frozen)
\STATE Extract features: $\mathbf{z}_i = E_\theta(\mathbf{x}_i)$ for all $\mathbf{x}_i \in \mathcal{D}^{\text{all}}$
\STATE Compute CLIP matrix: $P_{ij} = E_{\text{CLIP}}^{\text{img}}(\mathbf{x}_i) \cdot E_{\text{CLIP}}^{\text{txt}}(c_j)$ for all images
\STATE Initialize $\mathbf{W}_c$ randomly
\FOR{training iterations with early stopping}
    \STATE $\mathcal{L} \gets \sum_{j=1}^{K} -\text{sim}(c_j, \mathbf{q}_j)$
    \STATE Update $\mathbf{W}_c$ via Adam optimizer
\ENDFOR
\STATE Filter concepts: Keep only $c_j$ where $\text{sim}(c_j, \mathbf{q}_j) \geq 0.35$
\STATE Update $K \gets K - \Delta$ where $\Delta$ is number of filtered concepts
\STATE Save trained $\mathbf{W}_c$, concept normalization statistics, and final concept list
\end{algorithmic}

\subsection{Stage 3: Concept-based Discovery}
\begin{algorithmic}[1]
\STATE Load pretrained $E_\theta$ and trained $\mathbf{W}_c$
\STATE Initialize unlabeled heads $\{g_n\}$ and overclustering heads $\{o_n\}$
\FOR{epoch $= 1$ to $200$}
    \FOR{batch $\mathcal{B} = \mathcal{B}^l \cup \mathcal{B}^u$}
        \STATE Generate views: $\{\mathbf{v}_1, \mathbf{v}_2\}$ for each $\mathbf{x} \in \mathcal{B}$
        \STATE Extract concepts: $\hat{\mathbf{c}}_1 = \mathbf{W}_c E_\theta(\mathbf{v}_1)$, $\hat{\mathbf{c}}_2 = \mathbf{W}_c E_\theta(\mathbf{v}_2)$
        \STATE Normalize: $\tilde{\mathbf{c}}_1 = (\hat{\mathbf{c}}_1 - \boldsymbol{\mu}) / \boldsymbol{\sigma}$, $\tilde{\mathbf{c}}_2 = (\hat{\mathbf{c}}_2 - \boldsymbol{\mu}) / \boldsymbol{\sigma}$
        \STATE Compute logits: $\mathbf{l}_1 = [h(\tilde{\mathbf{c}}_1), g(\tilde{\mathbf{c}}_1)]$, $\mathbf{l}_2 = [h(\tilde{\mathbf{c}}_2), g(\tilde{\mathbf{c}}_2)]$
        \STATE \textbf{For labeled samples:} $\mathbf{y}_1 = \mathbf{y}_2 = [\mathbf{y}^l, \mathbf{0}_{C_u}]$
        \STATE \textbf{For unlabeled samples:} Generate $\hat{\mathbf{y}}_1, \hat{\mathbf{y}}_2$ via Sinkhorn-Knopp on $g(\tilde{\mathbf{c}})$
        \STATE Compute loss: 
    $\mathcal{L}_{\text{discovery}} = \mathbb{E}_{(\mathbf{v}_1, \mathbf{v}_2)} 
    \left[ \ell(\tilde{\mathbf{c}}_1, \hat{\mathbf{y}}_2) + \ell(\tilde{\mathbf{c}}_2, \hat{\mathbf{y}}_1) \right]$
        \STATE Update parameters via SGD with momentum
    \ENDFOR
\ENDFOR
\end{algorithmic}


\section{Similarity Function Details}
\label{app:sim-details}
\subsection{Cubed Cosine Similarity for Concept Alignment}

In Stage 2, we align learned concept activations with CLIP's vision-language similarity priors. We use a cubed cosine similarity function (Equation~\eqref{eq:cubed_cosine}):
\[
\text{sim}(\mathbf{q}_j, \mathbf{P}_{:,j}) = \frac{\bar{\mathbf{q}}_j^3 \cdot \bar{\mathbf{P}}_{:,j}^3}{\|\bar{\mathbf{q}}_j^3\|_2 \|\bar{\mathbf{P}}_{:,j}^3\|_2}
\]
where $\bar{\mathbf{v}}$ denotes mean-centered and standardized $\mathbf{v}$, and the cube is applied element-wise.

\subsection{Motivation}

Standard cosine similarity weights all samples equally when measuring correlation between activation patterns. However, for concept alignment in novel category discovery, this uniform weighting is suboptimal for two reasons:

\textbf{Sparse concept presence.} Most concepts are present in only a small fraction of images. For example, a concept like ``striped'' may be strongly present in a small subset of images while being absent or ambiguous in the remainder. Standard cosine similarity dilutes the signal from these informative samples with noise from the majority.

\textbf{Gradient signal concentration.} During optimization, we want gradients to flow primarily from samples where the concept is clearly present or clearly absent, not from ambiguous cases where CLIP's assessment may be unreliable.

Polynomial amplification addresses both issues by emphasizing extreme values. For a standardized activation $\bar{v}_i \in [-3, 3]$ (approximately), the cubic power preserves sign while amplifying magnitude for $|\bar{v}_i| > 1$, effectively reweighting samples so high-activation images contribute disproportionately to the similarity score~\cite{oikarinen2022clip, oikarinen2023label}.


Concepts with similarity below $\delta_{\text{align}} = 0.35$ are filtered out, 
ensuring that retained concepts are faithfully represented in the concept bottleneck layer. 
Table~\ref{tab:concept_stats} (Appendix \ref{app:ablation}) shows that this approach achieves average 
CLIP similarity between 0.371--0.487 across datasets, with concept retention rates of 
65--92\%, indicating effective concept learning in our NCD setting.

\section{Multi-view Self-labeling and Pseudo-label Generation}
\label{app:pseudolabels}

In this section, we describe how multi-view self-labeling is used to generate pseudo-labels for our unified objective. This approach, originally developed for self-supervised and semi-supervised learning~\citep{caron2020unsupervised, chen2020simple}, enables effective knowledge transfer from labeled to unlabeled data.

\subsection{Multi-view Generation}

Given an image $\mathbf{x}$, we generate two augmented views $\mathbf{v}_1$ and $\mathbf{v}_2$ by applying random transformations consisting of random cropping, color jittering, and horizontal flipping. These augmentations, which have proven effective in contrastive learning~\citep{chen2020simple}, encourage the model to learn transformation-invariant representations.

For labeled samples $(\mathbf{x}, y^l) \in \mathcal{D}^l$, both views are associated with the same ground-truth label, padded with zeros for the novel categories:
\begin{equation}
    \mathbf{y}_1 = \mathbf{y}_2 = [\mathbf{y}^l, \mathbf{0}_{C_u}].
\end{equation}

For unlabeled samples $\mathbf{x} \in \mathcal{D}^u$, we use the \textit{swapped prediction} task~\citep{caron2020unsupervised}: the pseudo-label computed from one view serves as the target for the other view:
\begin{equation}
    \mathbf{y}_1 = [\mathbf{0}_{C_l}, \hat{\mathbf{y}}_2], \quad \mathbf{y}_2 = [\mathbf{0}_{C_l}, \hat{\mathbf{y}}_1].
\end{equation}

This formulation encourages the model to output consistent predictions for different augmentations of the same image. When evaluating each term in the loss, we apply a stop-gradient for the pseudo-label, i.e., the gradient flows only through the prediction branch.

\subsection{Pseudo-label Generation via Optimal Transport}

A na\"ive approach to generate pseudo-labels would be to directly use the softmax outputs of $g(\tilde{\mathbf{c}})$, setting $\hat{\mathbf{y}}_2 = \sigma(g(\tilde{\mathbf{c}}_1)/\tau)$. However, as observed in~\citet{asano2019self}, this may lead to degenerate solutions where the model always predicts the same cluster for any input---in this case, the prediction and pseudo-label become identical and no learning occurs.

To prevent such collapse, we compute pseudo-labels using the Sinkhorn-Knopp algorithm~\citep{cuturi2013sinkhorn}, which adds an entropy regularizer that encourages uniform partition of pseudo-labels across all clusters. For a mini-batch containing $B^u$ unlabeled samples, let $\mathbf{L} = [\mathbf{l}_1, \ldots, \mathbf{l}_{B^u}] \in \mathbb{R}^{C_u \times B^u}$ be the matrix whose columns are the concept-space logits computed by $g(\tilde{\mathbf{c}})$. The pseudo-label matrix $\hat{\mathbf{Y}} = [\hat{\mathbf{y}}_1, \ldots, \hat{\mathbf{y}}_{B^u}]^\top \in \mathbb{R}^{C_u \times B^u}$ is obtained by solving:
\begin{equation}
    \hat{\mathbf{Y}} = \arg\max_{\mathbf{Y} \in \Gamma} \text{Tr}(\mathbf{Y}\mathbf{L}^\top) + \epsilon \, \text{H}(\mathbf{Y}),
\end{equation}
where $\epsilon > 0$ is a hyperparameter, $\text{H}(\mathbf{Y}) = -\sum_{ij} Y_{ij} \log Y_{ij}$ is the entropy function used to ``scatter'' the pseudo-labels, and $\Gamma$ is the transportation polytope defined as:
\begin{equation}
    \Gamma = \left\{ \mathbf{Y} \in \mathbb{R}^{C_u \times B^u}_+ \;\middle|\; \mathbf{Y}\mathbf{1}_{B^u} = \frac{1}{C_u}\mathbf{1}_{C_u}, \;\; \mathbf{Y}^\top\mathbf{1}_{C_u} = \frac{1}{B^u}\mathbf{1}_{B^u} \right\}.
\end{equation}

These constraints enforce that, on average, each cluster is selected $B^u / C_u$ times per batch, preventing any single cluster from dominating. Following~\citet{caron2020unsupervised}, we use 3 Sinkhorn iterations with $\epsilon = 0.05$. The resulting pseudo-labels $\hat{\mathbf{y}}_j \in [0,1]^{C_u}$ are soft assignments; we found that using soft pseudo-labels yields better performance than discretizing them.

\subsection{Concept-Space Pseudo-labels: The Key Difference}

The critical distinction between xNCD and standard NCD methods lies in \textit{where} pseudo-labels are computed. In prior work such as UNO~\citep{fini2021unified}, pseudo-labels are generated from opaque feature representations cluster assignments reflect arbitrary similarities in high-dimensional feature space with no semantic meaning.

In xNCD, pseudo-labels are computed from concept-space logits $g(\tilde{\mathbf{c}})$, where $\tilde{\mathbf{c}} \in \mathbb{R}^K$ represents interpretable concept activations. This design choice has two important consequences:

\begin{enumerate}
    \item \textbf{Interpretable clustering}: Samples are grouped together because they share similar \textit{concept activations}, not because of opaque feature correlations. A cluster of dogs emerges because its members all activate ``four-legged,'' ``furry,'' and ``mammal'' concepts.
    
    \item \textbf{Semantic consistency}: The Sinkhorn-Knopp constraints operate in concept space, encouraging each discovered cluster to have a coherent concept profile rather than arbitrary feature statistics.
\end{enumerate}

This is the architectural modification that enables xNCD to provide human-understandable explanations for discovered categories while maintaining competitive clustering performance.

\section{Formal Definitions of Explanations}
\label{app:explanations}

\subsection{Interpretable Cluster Characterization}

A key advantage of xNCD is that discovered clusters admit human-understandable descriptions through their concept activation profiles.

\subsubsection{Cluster-Level Signatures}

After discovery, each novel category $i \in \{1, \ldots, C_u\}$ is characterized by its mean concept activation vector. Let $S_i = \{\mathbf{x} \in \mathcal{D}^u \mid \hat{y}(\mathbf{x}) = C_l + i\}$ denote samples assigned to cluster $i$. The cluster's concept profile is:
\begin{equation}
\bar{\mathbf{c}}^{(i)} = \frac{1}{|S_i|} \sum_{\mathbf{x} \in S_i} \tilde{\mathbf{c}}(\mathbf{x}) \in \mathbb{R}^K.
\end{equation}
The top-$r$ activated concepts provide a semantic signature:
\begin{equation}
\text{Signature}(i) = \text{Top}_r(\bar{\mathbf{c}}^{(i)}),
\end{equation}
where $\text{Top}_r$ returns indices of the $r$ concepts with highest mean activation. In our experiments, we use $r = 10$ to provide concise yet informative cluster characterizations.

\subsubsection{Instance-Level Explanations}

For a test image $\mathbf{x}$ predicted as novel category $i$, we explain the prediction by identifying which signature concepts are strongly activated:
\begin{equation}
\text{Explanation}(\mathbf{x}, i) = \left\{c_j \mid \tilde{c}_j(\mathbf{x}) > \delta_{\text{explain}}, \; j \in \text{Signature}(i)\right\},
\end{equation}
where $\delta_{\text{explain}}$ is an activation threshold. In our experiments, we use $\delta_{\text{explain}} = 0$ (i.e., include all positively activated signature concepts).

This enables explanations such as: \textit{``This image belongs to novel category 3 because it strongly activates concepts \{four-legged, fur, tail\} which characterize this cluster.''} Such explanations allow domain experts to validate whether discovered categories are semantically meaningful and to identify potential failure modes.

\section{Implementation Details}
\label{app:impl-details}

During pre-training stage 1, we pretrain the encoder for 200 epochs on labeled data using SGD with momentum 0.9 (lr=0.1, cosine annealing, weight decay $10^{-4}$). Temperature $\tau=0.1$ for all softmax layers. In Stage 2, we train the concept projection $W_c$ for 1000 iterations using Adam optimizer (lr=$10^{-3}$) with early stopping (patience=5 validation checks). We use an 80/20 train/validation split. The projection batch size is 50,000 samples. Concepts with CLIP similarity below $\delta_{\text{align}} = 0.35$ are filtered out.

In the discovery phase, we train for 200 epochs using SGD with momentum 0.9 (base lr=0.4, weight decay $1.5 \times 10^{-4}$). We use 5 clustering heads with overclustering factor 3. The Sinkhorn-Knopp algorithm uses 3 iterations with $\epsilon=0.05$. Concept activations are normalized using statistics computed in Stage 2: $\tilde{c} = (W_c z - \mu) / \sigma$, where $\mu, \sigma \in \mathbb{R}^K$ are per-concept mean and standard deviation.

\section{Dataset Statistics}
\label{app:dataset-stats}
\begin{table}[!h]
\centering
\caption{Statistics of the datasets and splits used in our novel category discovery benchmark. Each dataset is split into disjoint labeled (known) and unlabeled (novel) category sets.}
\label{tab:datasets}
\begin{tabular}{@{}llcccc@{}}
\toprule
\multirow{2}{*}{Dataset} & \multicolumn{2}{c}{Labeled} & \multicolumn{2}{c}{Unlabeled} & \multirow{2}{*}{$K$} \\
\cmidrule(lr){2-3} \cmidrule(lr){4-5}
 & Images & categories & Images & categories & \\
\midrule
CIFAR-10 & 25K & 5 & 25K & 5 & 143 \\
CIFAR-100 & 40K & 80 & 10K & 20 & 892 \\
CUB-200 & 8.5K & 170 & 1.4K & 30 & 671 \\
\bottomrule
\end{tabular}
\end{table}

\section{Ablation Study}
\label{app:ablation}
\subsection{Clustering Quality Metrics}
\label{app:cquality-metrics}
We evaluate clustering quality using NMI and ARI (Table~\ref{tab:clustering_metrics}). xNCD achieves strong clustering coherence, with NMI of $86.34\%$ and ARI of $85.11\%$ on CIFAR-10 under task-aware evaluation, indicating that discovered clusters are semantically meaningful.

\begin{table}[ht]
\centering
\caption{Clustering quality metrics on novel categories.}
\label{tab:clustering_metrics}
\small
\begin{tabular}{l|cc|cc}
\toprule
\multirow{2}{*}{Dataset} & \multicolumn{2}{c|}{Task-Aware} & \multicolumn{2}{c}{Task-Agnostic} \\
\cmidrule(lr){2-3} \cmidrule(lr){4-5}
 & NMI & ARI & NMI & ARI \\
\midrule
CIFAR-10 & 86.34$_{\pm0.3}$ & 85.11$_{\pm0.4}$ & 83.22$_{\pm0.4}$ & 83.21$_{\pm0.5}$ \\
CIFAR-100 & 79.00$_{\pm0.5}$ & 70.39$_{\pm0.6}$ & 76.63$_{\pm0.6}$ & 65.36$_{\pm0.7}$ \\
CUB-200 & 63.57$_{\pm0.8}$ & 33.39$_{\pm1.0}$ & 65.73$_{\pm0.9}$ & 34.31$_{\pm1.1}$ \\
\bottomrule
\end{tabular}
\end{table}

\subsection{Concept Set Statistics.} Table~\ref{tab:concept_stats} summarizes concept filtering across datasets. Initial concepts are generated via GPT-3, then pre-filtered to remove duplicates and category-similar concepts. After Stage 2 training, we apply a second filtering step: concepts with learned CLIP
similarity below threshold $\delta_{\text{align}} = 0.35$ are removed, as they indicate poor alignment between the projection layer and CLIP's semantic understanding. This ensures all retained concepts are faithfully represented in the concept bottleneck layer.

\begin{table}[htb]
\centering
\caption{Concept set statistics. \textit{Generated}: after initial filtering (Appendix~\ref{app:concept-generation}). \textit{Final}: after Stage 2 CLIP similarity filtering ($\delta_{\text{align}} = 0.35$). \textit{Avg. Sim.}: average CLIP similarity of retained concepts.}
\label{tab:concept_stats}
\small
\begin{tabular}{lcccc}
\toprule
Dataset & Generated & Final & Avg. Sim. \\
\midrule
CIFAR-10  & 143 & 131 & 0.487 \\
CIFAR-100  & 892 & 422 & 0.371 \\
CUB-200  & 671 & 453 & 0.402 \\
\bottomrule
\end{tabular}
\end{table}

\subsection{Effect of Concept Vocabulary Size}
\label{app:cp-size}
Table~\ref{tab:concept_ablation} examines how concept vocabulary size affects discovery on CUB-200. We evaluate expert-annotated attributes (312 concepts), GPT-generated descriptions (369 concepts), and their combination (671 concepts).

Results demonstrate that larger vocabularies improve performance: the combined set achieves 67.88\%, outperforming smaller vocabularies by approximately 5\%. This suggests that richer concept spaces provide more discriminative dimensions for distinguishing fine-grained novel categories. Additionally, the comparable performance between expert and GPT concepts indicates that concept diversity may be as important as domain-specific quality for novel category discovery.

\begin{table}[htb]
\centering
\caption{Ablation study on concept source for CUB-200 (170+30 split). Expert concepts are domain-specific bird attributes, GPT concepts are LLM-generated descriptions.}
\label{tab:concept_ablation}
\small
\begin{tabular}{lcc}
\toprule
Concept Source & \# Concepts & Accuracy (\%) \\
\midrule
Combined & 671 & 67.88 \\
GPT-generated & 369 & 63.12 \\
Expert-annotated & 312 & 64.48 \\
\bottomrule
\end{tabular}
\end{table}

\section{Discussion}
\label{app:discussion}

\textbf{Summary of Findings.} Our experiments demonstrate that xNCD successfully bridges interpretability and novel category discovery. On standard benchmarks, xNCD achieves clustering accuracy within 1-3\% of state-of-the-art methods (92.63\% vs. UNO's 93.4\% on CIFAR-10) while being the only method to provide human-understandable explanations for discovered categories. Notably, on CIFAR-100 under task-agnostic evaluation, xNCD outperforms UNO (76.45\% vs. 73.2\%), suggesting that concept-based representations may offer advantages for distinguishing between known and novel categories.

\textbf{The Accuracy-Interpretability Trade-off.} Our results reveal a nuanced trade-off between clustering performance and interpretability. Removing the concept bottleneck layer entirely (operating in raw feature space) yields a 1-2\% accuracy improvement, confirming that the concept layer introduces some information bottleneck. However, this cost is modest and is offset by the substantial interpretability benefits: discovered clusters can be characterized through concept profiles, enabling domain expert validation and model debugging.

\textbf{Limitations.} Our approach inherits limitations from both concept bottleneck models 
and novel category discovery. First, the concept vocabulary must be either generated (via LLMs) 
or annotated by domain experts, and our dependency on CLIP may miss domain-specific 
attributes—particularly problematic for specialized domains (e.g., medical imaging) where 
CLIP lacks relevant training data. Second, like most NCD methods, we assume the number of 
novel categories $C_u$ is known a priori; real-world open-world scenarios may contain 
substantially more novel categories than our experimental settings. Finally, while our 
unified objective eliminates self-supervised pretraining, performance still depends on 
semantic similarity between labeled and unlabeled categories for effective knowledge transfer.

\textbf{Broader Impact.} xNCD enables interpretable open-world learning, with potential applications in scientific discovery domains where understanding \textit{why} categories emerge is as important as discovering them. In biodiversity monitoring, ecologists could validate whether discovered species groupings reflect meaningful taxonomic distinctions. In drug discovery, novel compound clusters could be characterized through interpretable molecular properties. We do not foresee negative societal consequences specific to this work.

\section{Visualization}
\label{app:vis}
\subsection{Cluster-wise Explanation}
\begin{figure*} [ht]
    \centering
    \includegraphics[width=0.49\linewidth]{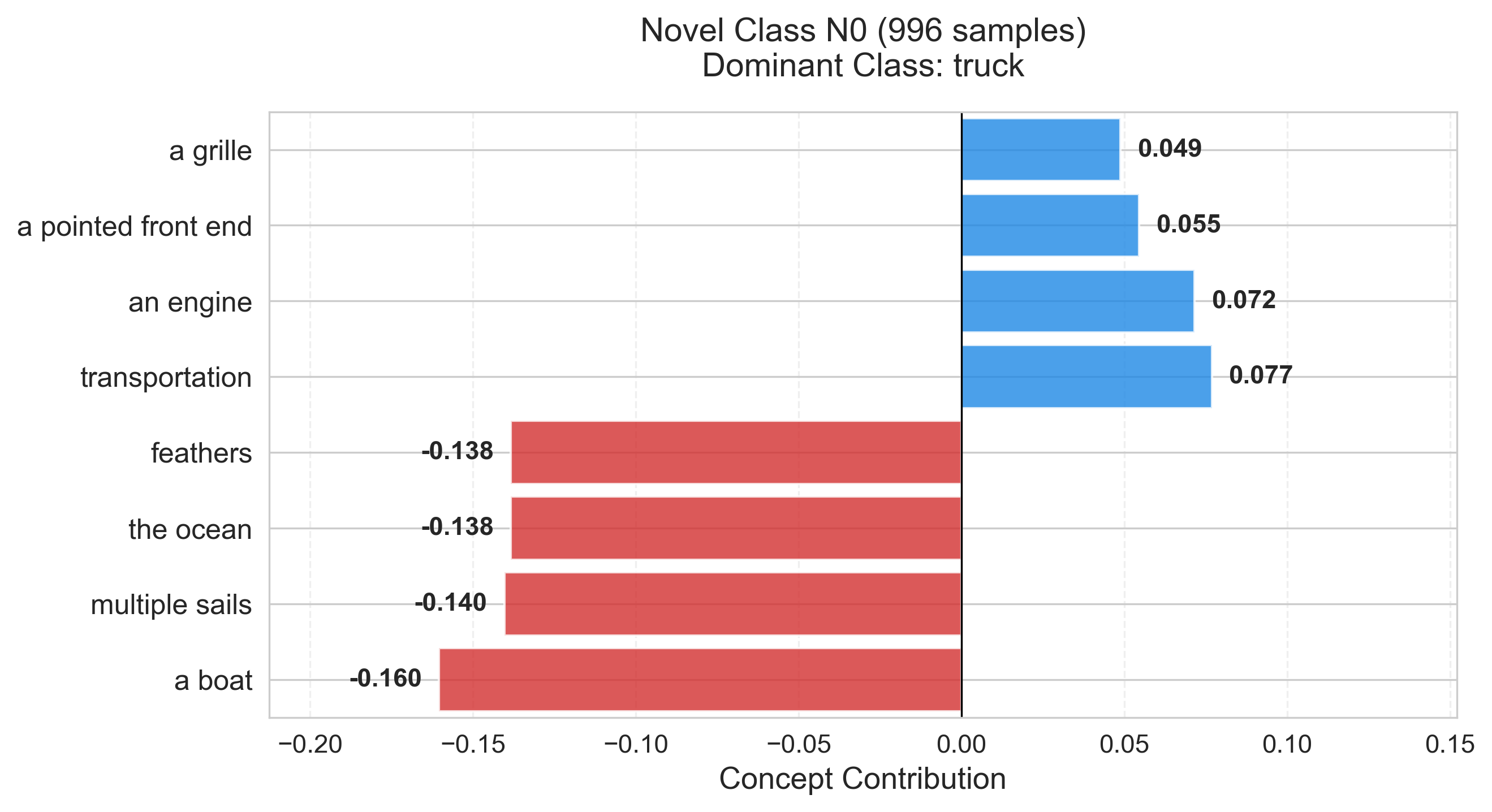}
    \includegraphics[width=0.49\linewidth]{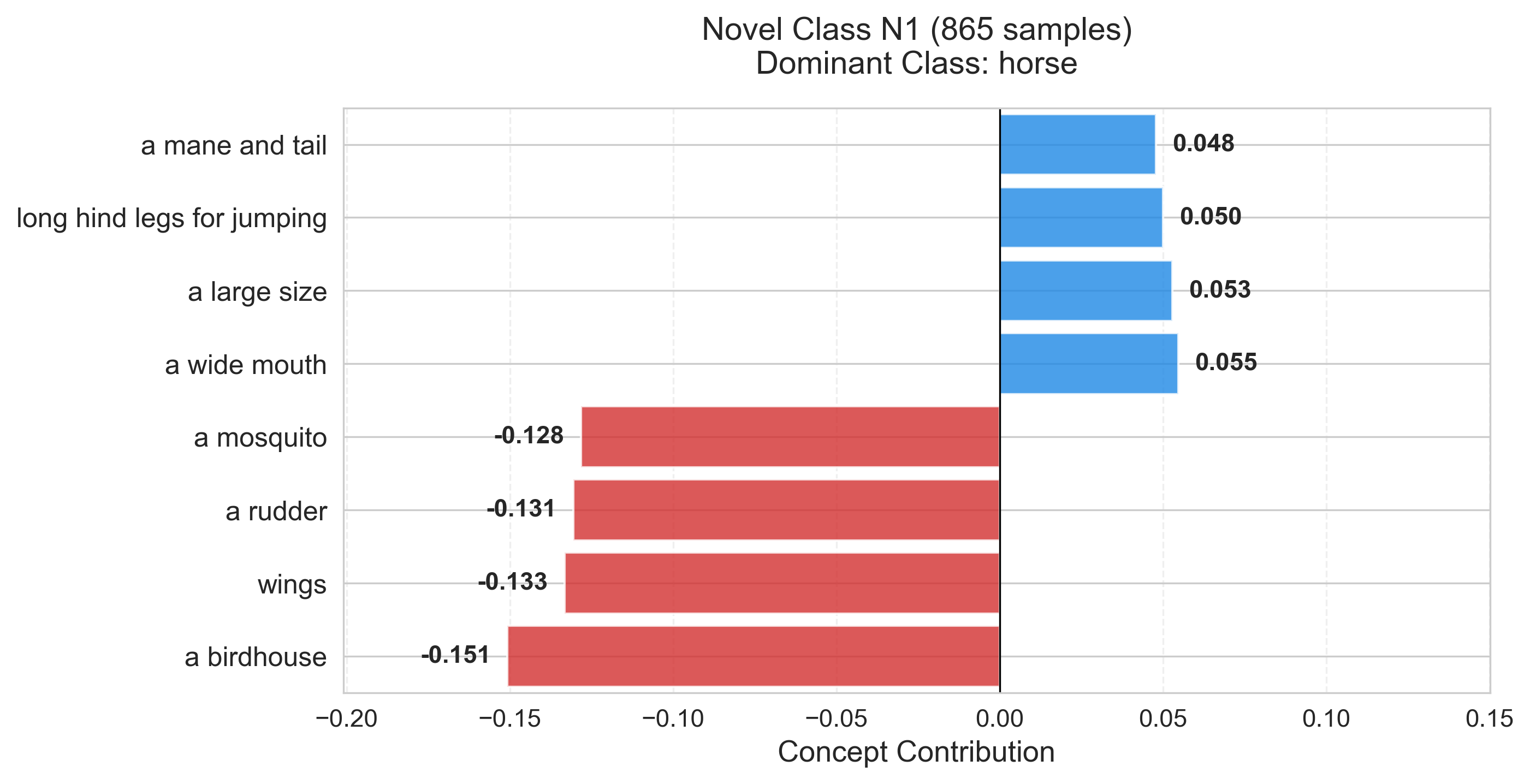}
    \includegraphics[width=0.49\linewidth]{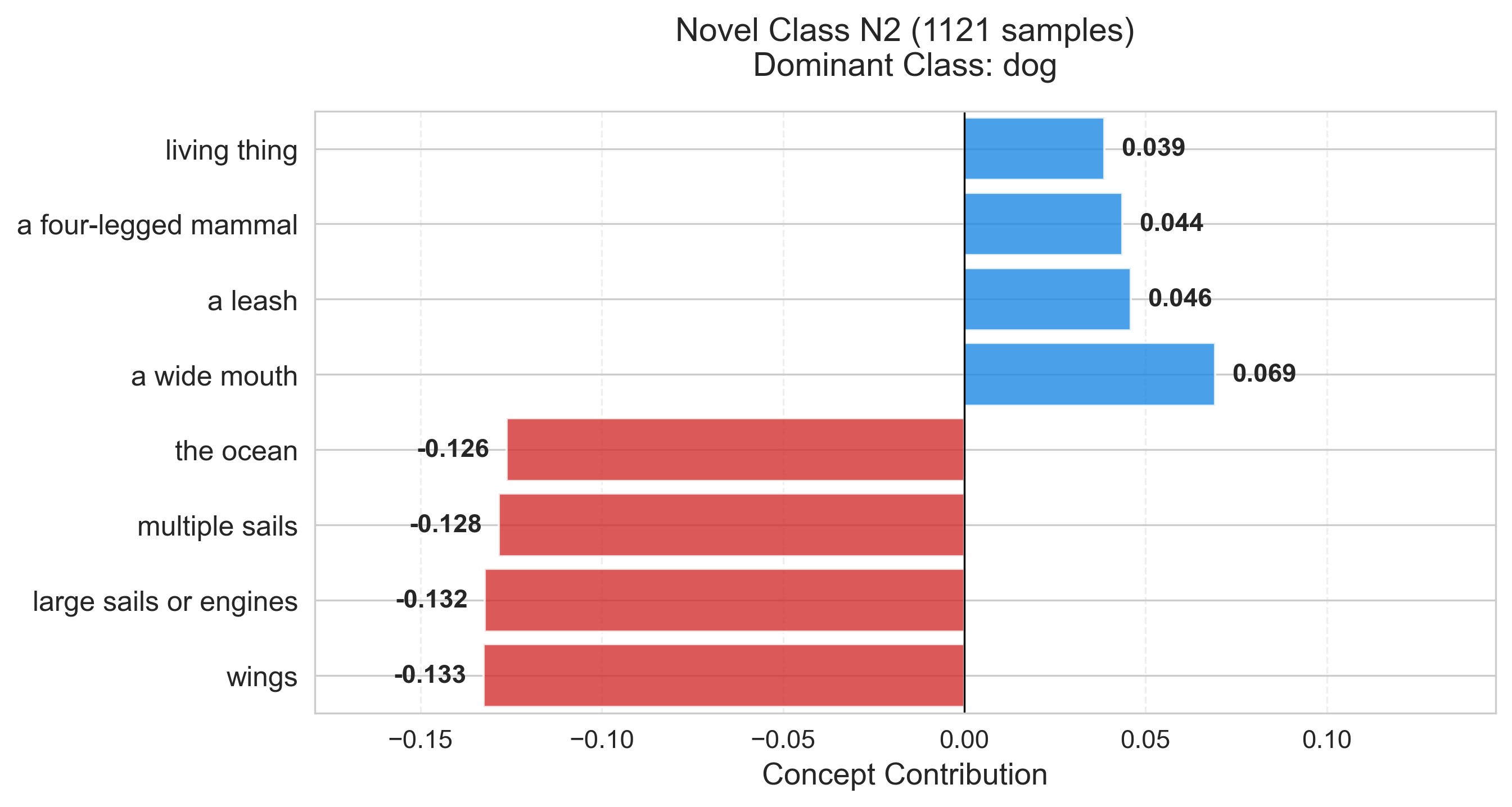}
        \includegraphics[width=0.49\linewidth]{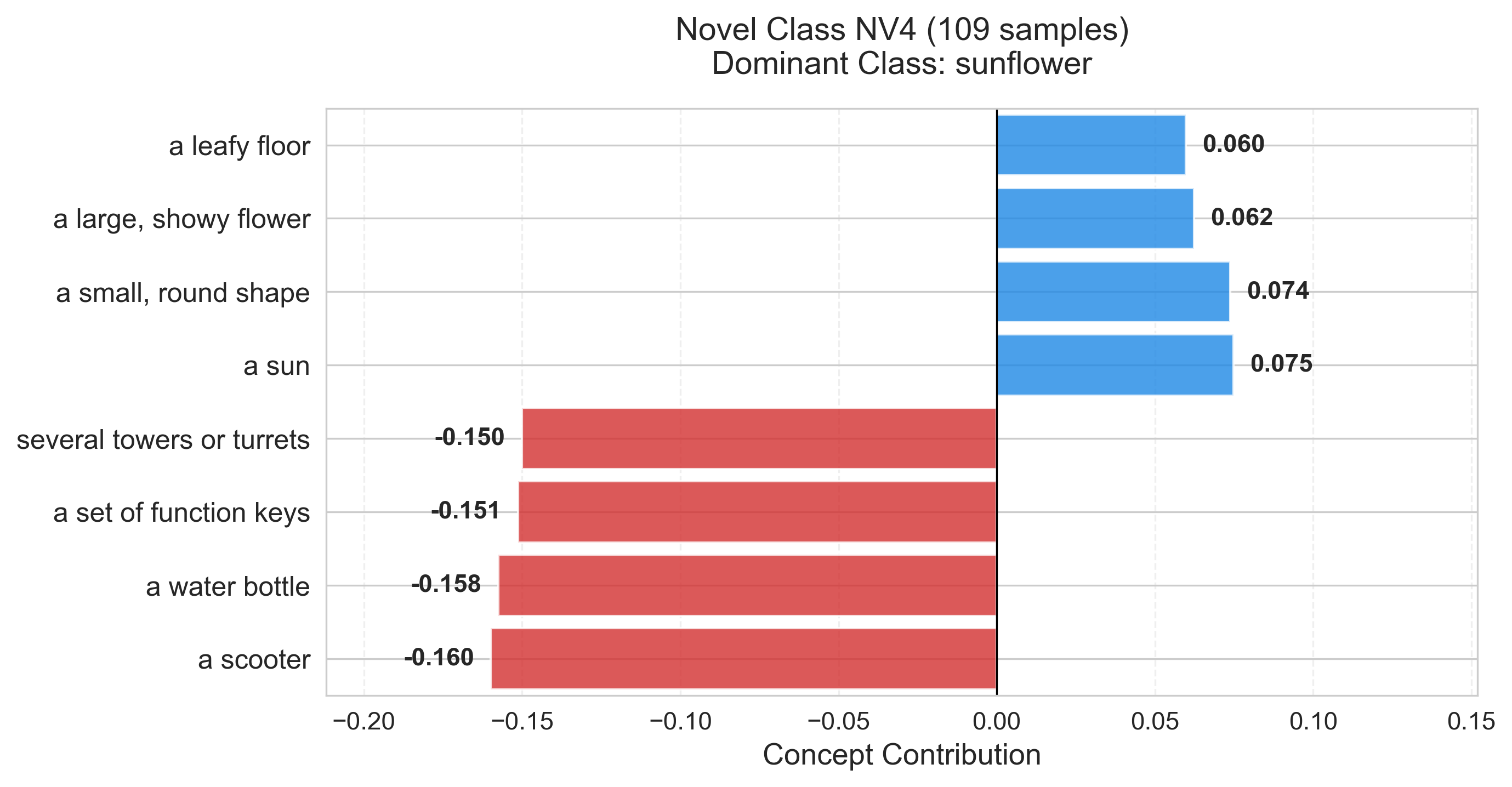}
            \includegraphics[width=0.49\linewidth]{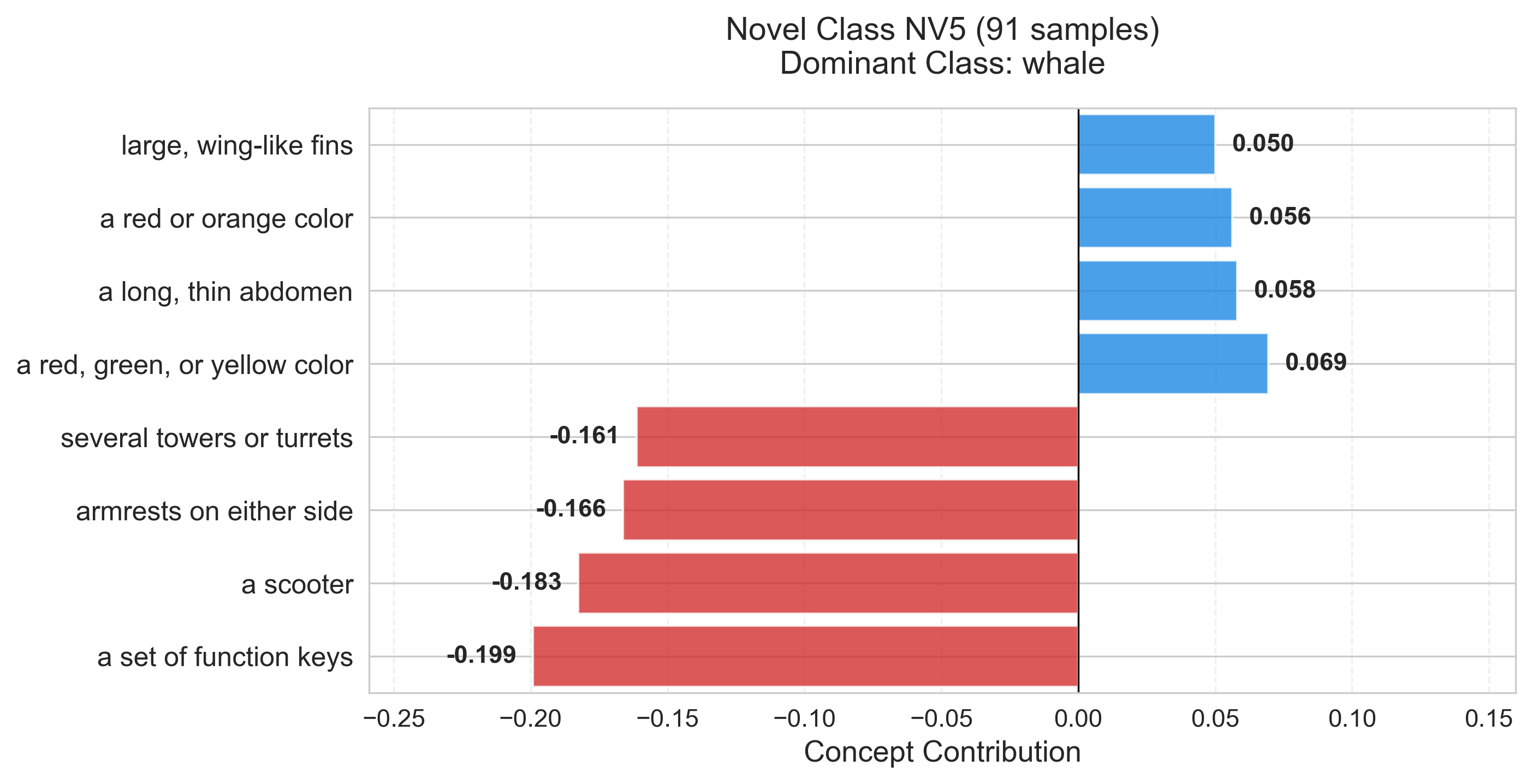}
              \includegraphics[width=0.49\linewidth]{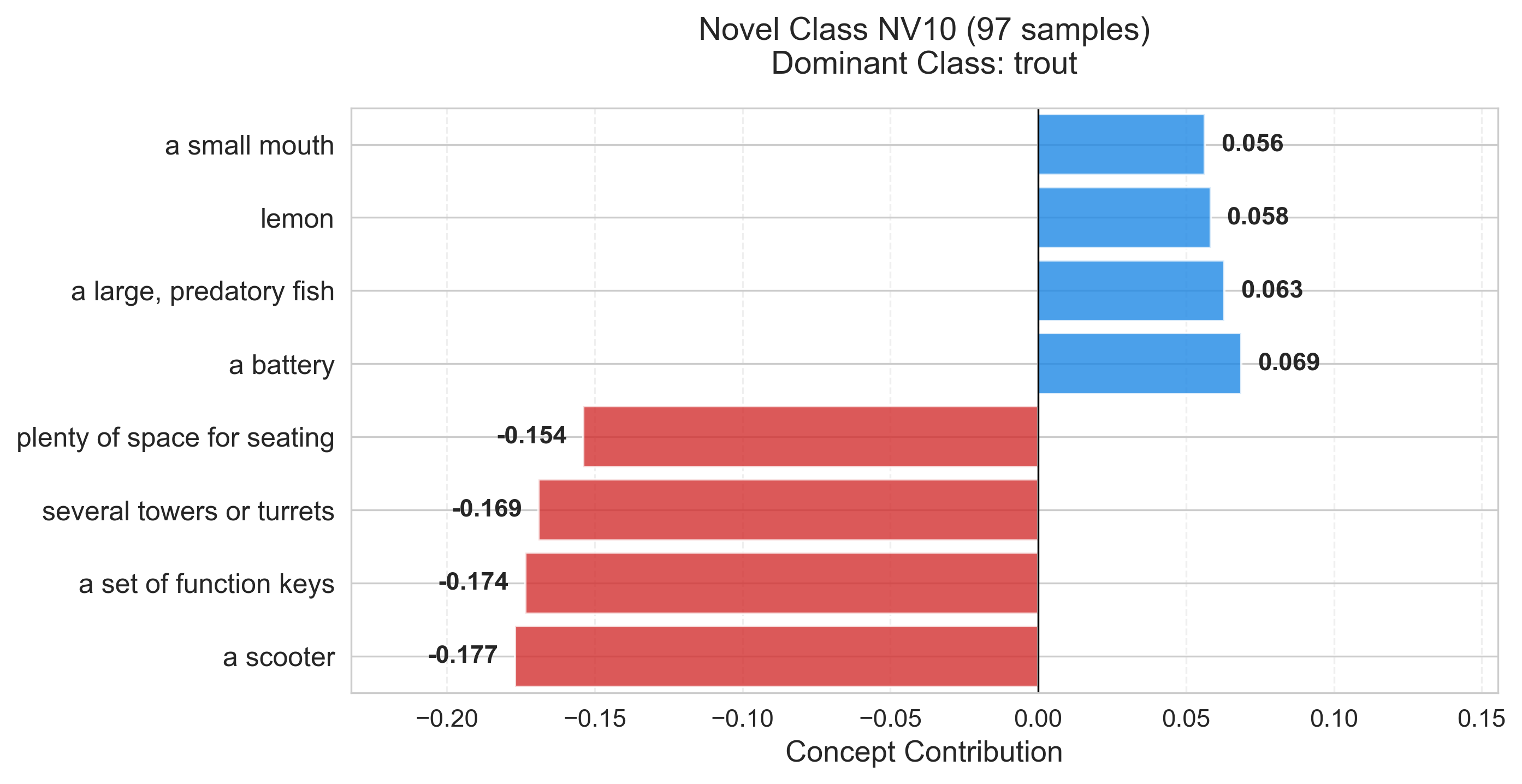}
\caption{Additional cluster-level concept profiles for discovered novel categories}

    \label{fig:app-concept-p}
\end{figure*}

\clearpage
\subsection{Instance level Explanation}

\begin{figure*} [htb]
    \centering

 \includegraphics[width=0.3\linewidth]{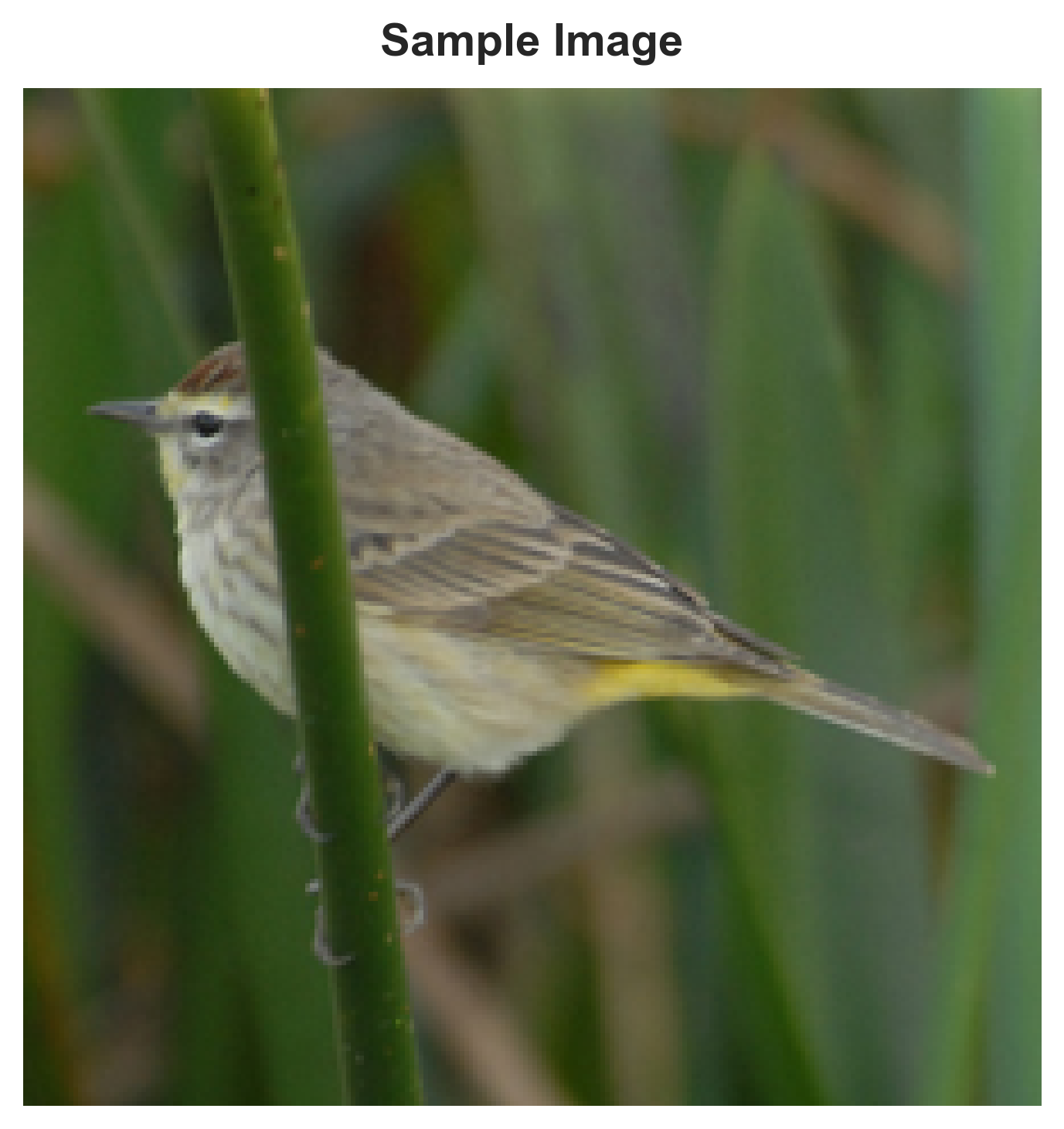}
    \includegraphics[width=0.6\linewidth]{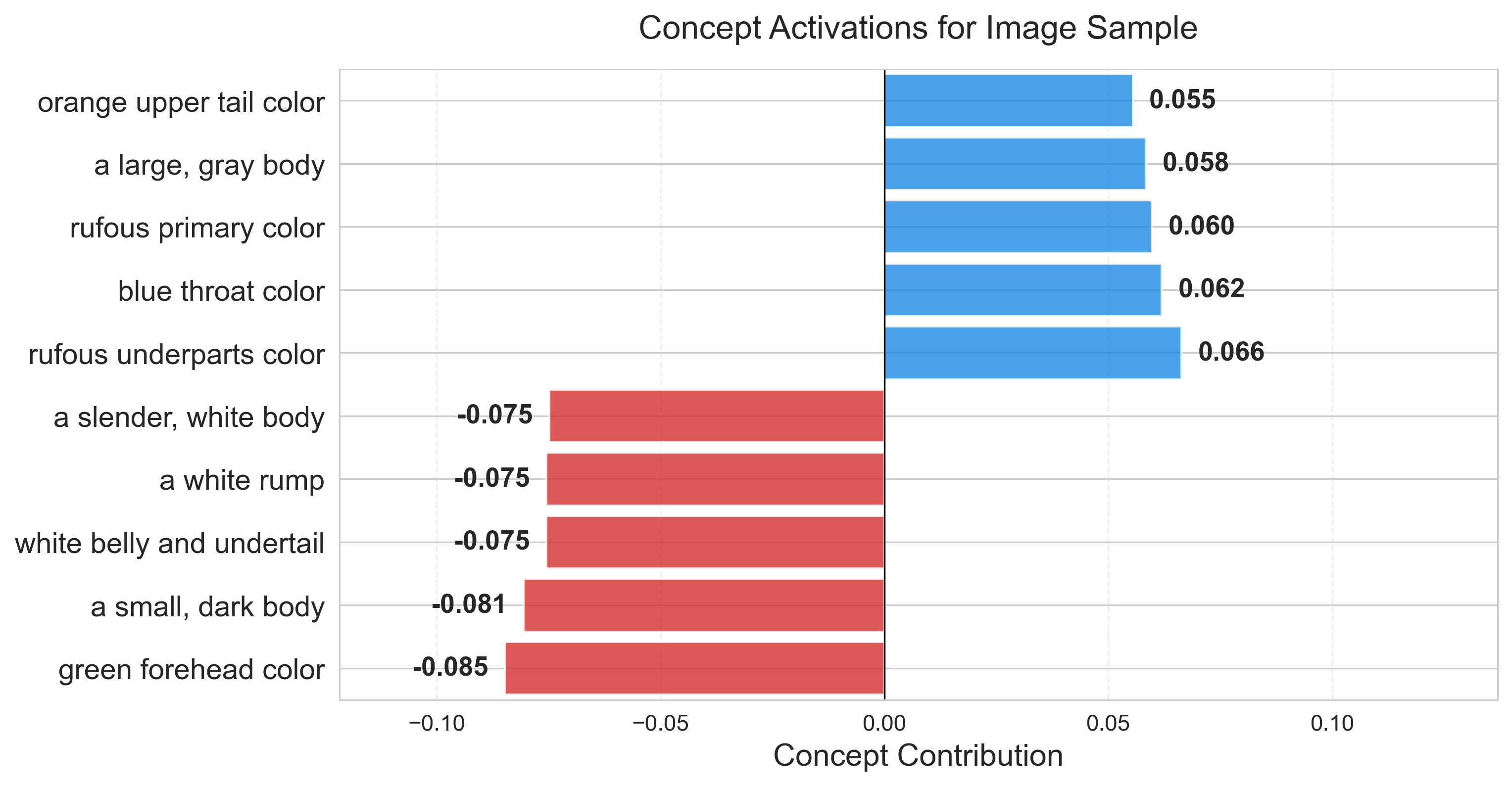}

 \includegraphics[width=0.3\linewidth]{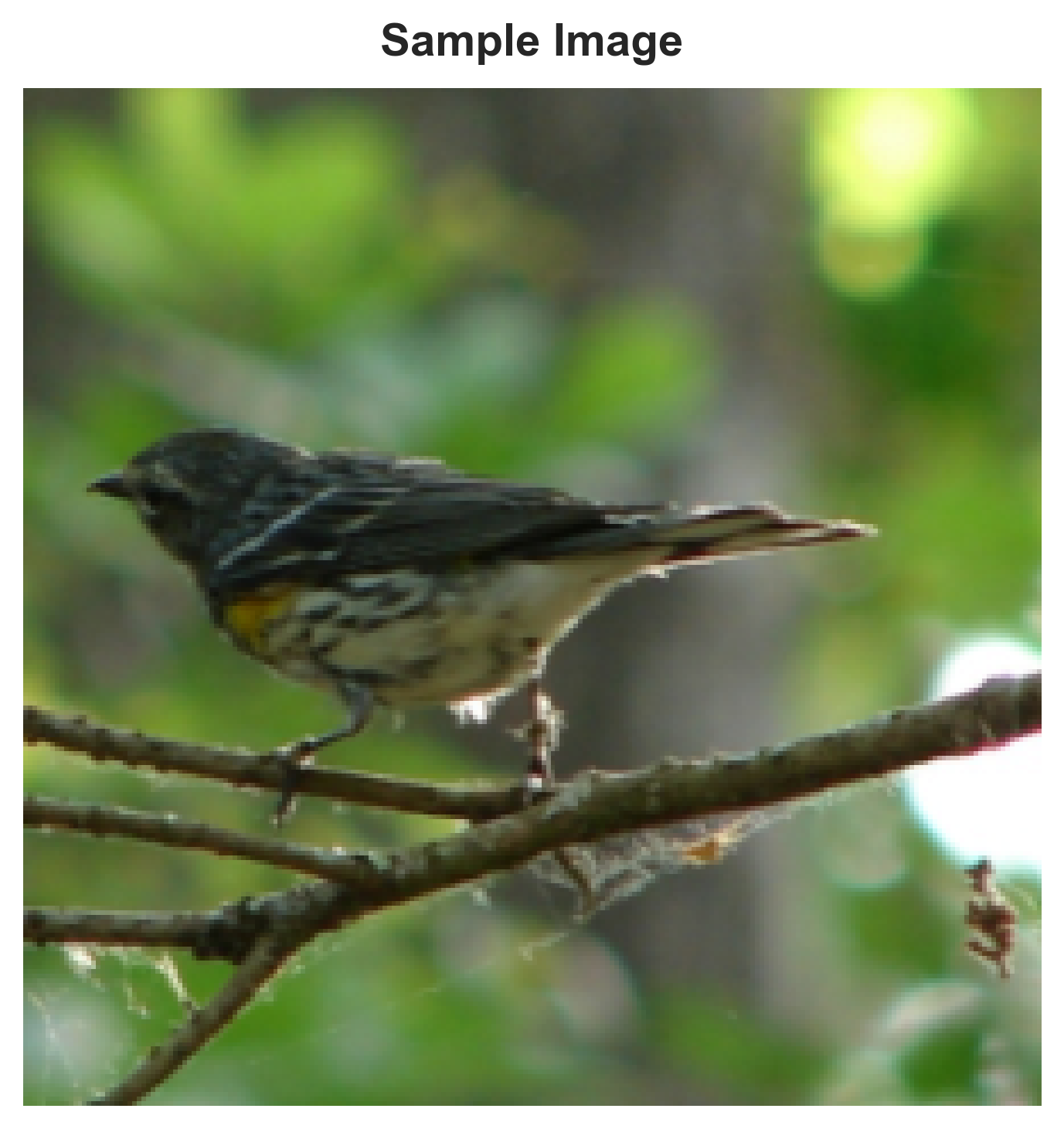}
    \includegraphics[width=0.6\linewidth]{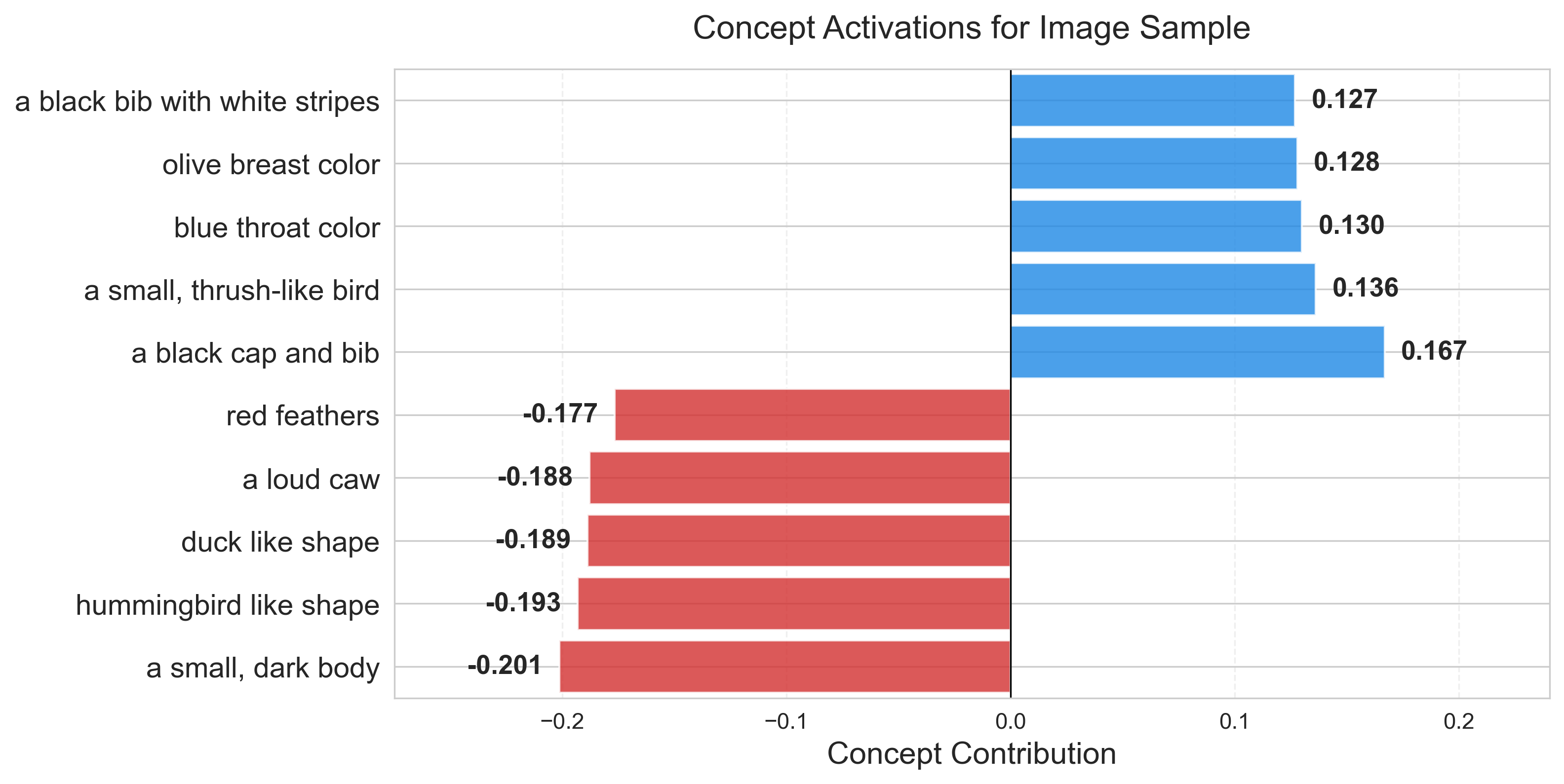}
    
    \includegraphics[width=0.3\linewidth]{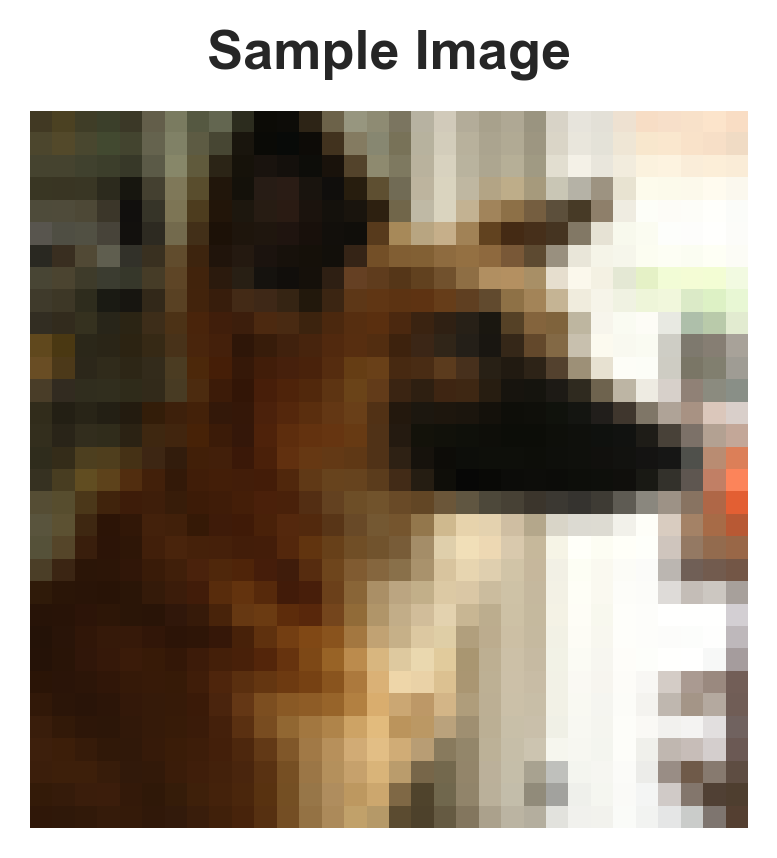}
    \includegraphics[width=0.6\linewidth]{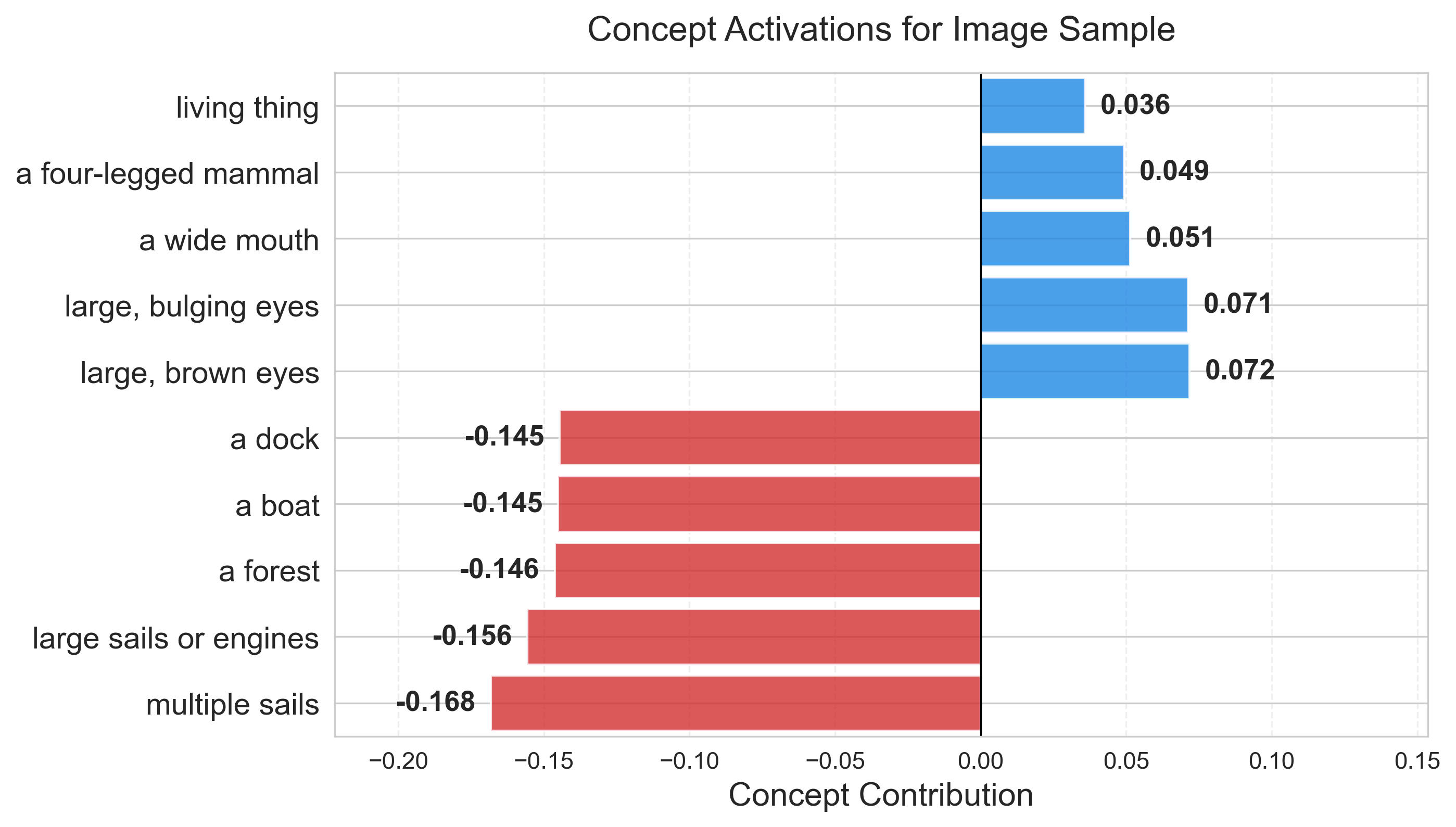}
    
    \includegraphics[width=0.3\linewidth]{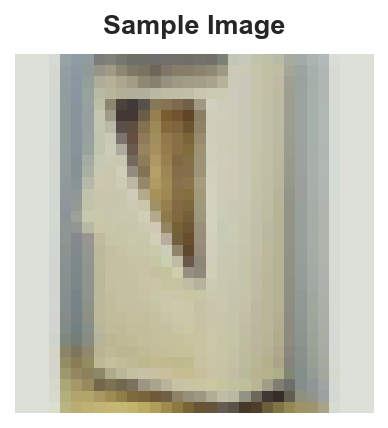}
    \includegraphics[width=0.6\linewidth]{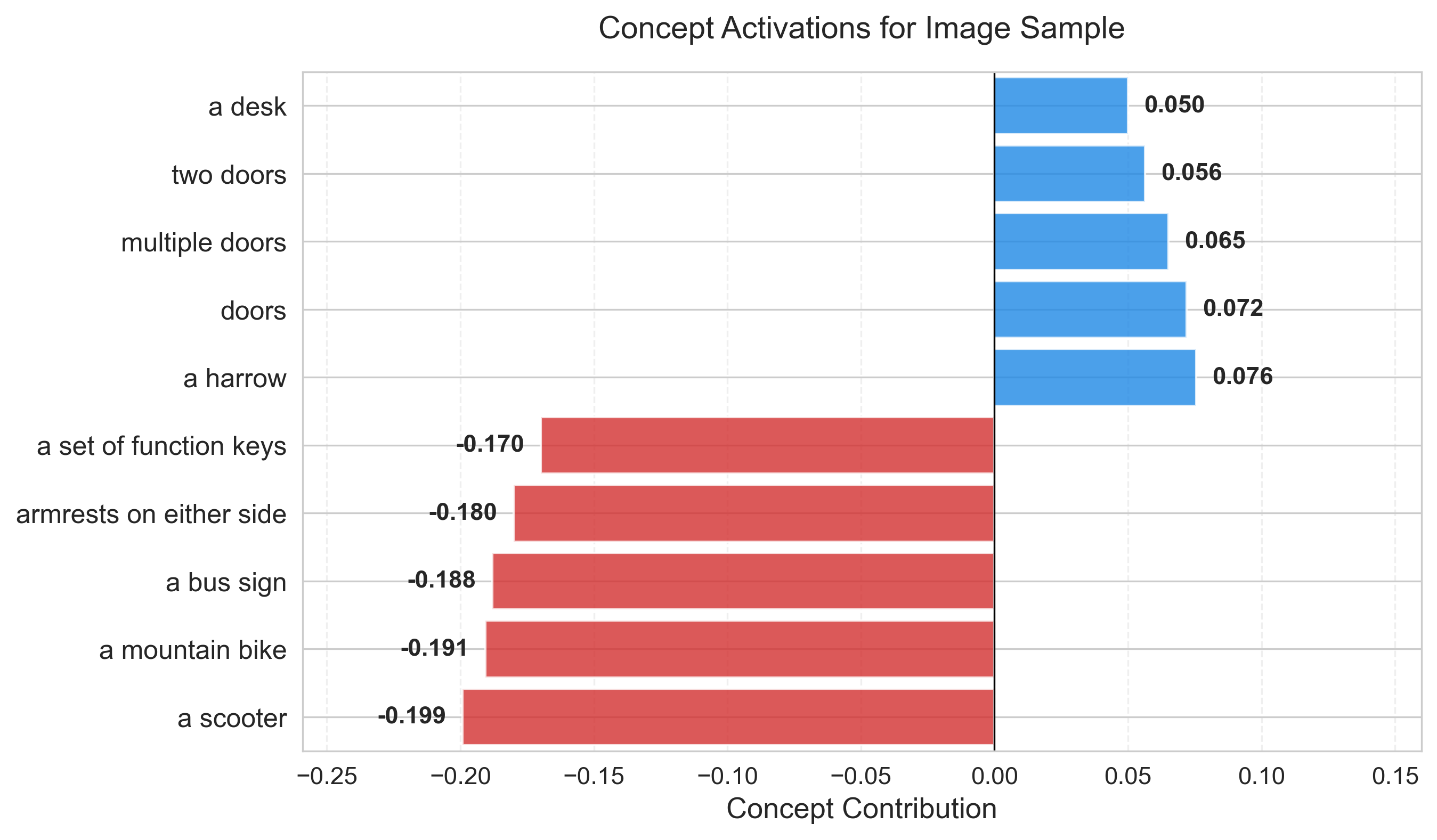}

\caption{Additional instance-level concept explanations for samples from CUB-200, CIFAR-10, and CIFAR-100.}

    \label{fig:app-instance-level}
\end{figure*}

\section{Experiments Computing Resources}
\label{app:exp-resource}
All experiments were performed on NRP Nautilus HPC \citep{10.1145/3708035.3736060}. Training and evaluation used a single GPU for CIFAR10/100 and two GPU for CUB-200 dataset per experiment. Node specifications are provided in Table~\ref{resource}.

\begin{table}[!ht]
\centering
\caption{Hardware configuration for experiments}
\small
\begin{tabular}{l|l}
\toprule
\textbf{Component} & \textbf{Configuration} \\
\midrule
CPUs & Dual 12-core SkyLake 5000 series \\
GPUs & Nvidia RTXA6000 64GB \\
RAM & 32GB \\
SSD & 240GB \\
\bottomrule
\end{tabular}
\label{resource}
\end{table}


\newpage
\section*{NeurIPS Paper Checklist}

\begin{enumerate}

    \item {\bf Claims}
        \item[] Question: Do the main claims made in the abstract and introduction accurately reflect the paper's contributions and scope?
        \item[] Answer: \answerYes{} 
        \item[] Justification: {The abstract and introduction accurately reflect the paper's 
    contributions. The claims cover (i) concept-space NCD framework, (ii) competitive 
    empirical performance on CIFAR-10/100 and CUB-200, and (iii) theoretical 
    hypothesis-space restriction via concept bottleneck, all of which are supported 
    by experiments.}
    \item {\bf Limitations}
        \item[] Question: Does the paper discuss the limitations of the work performed by the authors?
        \item[] Answer: \answerYes{} 
        \item[] Justification: {We have discussed the limitations of our methodology in Appendix \ref{app:discussion}, where we reflect on potential areas for improvement and further exploration}
    
\item {\bf Theory assumptions and proofs}
    \item[] Question: For each theoretical result, does the paper provide the full set of assumptions and a complete (and correct) proof?
  \item[] Answer: \answerYes{} 
    \item[] Justification: {
    Proposition \ref{prop:subset} states assumptions explicitly (linearity of concept map, $K < min\{C, dz\}$) and provides a complete proof in the main paper. The semantic anchoring constraint beyond rank restriction is discussed following the proposition.
    }

    \item {\bf Experimental result reproducibility}
    \item[] Question: Does the paper fully disclose all the information needed to reproduce the main experimental results of the paper to the extent that it affects the main claims and/or conclusions of the paper (regardless of whether the code and data are provided or not)?
    \item[] Answer: \answerYes{} 
    \item[] Justification: {
   All information needed to reproduce the main results is provided 
in the paper, independent of the supplemental code. Full implementation 
details are provided in Appendix~\ref{app:impl-details}, including 
optimizer settings, learning rates, batch sizes, number of epochs, 
Sinkhorn hyperparameters, and concept filtering thresholds. Dataset 
splits are specified in Appendix~\ref{app:dataset-stats}. Algorithms 
for all three stages are provided in Appendix~\ref{app:algo}.
    }

\item {\bf Open access to data and code}
    \item[] Question: Does the paper provide open access to the data and code, with sufficient instructions to faithfully reproduce the main experimental results, as described in supplemental material?
    \item[] Answer: \answerYes{} 
    \item[] Justification: { Anonymized code is provided as supplemental material 
with instructions to reproduce the experimental results. The datasets used (CIFAR-10, CIFAR-100, and CUB-200) are publicly available.}

\item {\bf Experimental setting/details}
    \item[] Question: Does the paper specify all the training and test details (e.g., data splits, hyperparameters, how they were chosen, type of optimizer) necessary to understand the results?
\item[] Answer: \answerYes{} 
    \item[] Justification: { Training and evaluation details are provided in Section \ref{sec:exp-res} 
and Appendix \ref{app:impl-details}, including backbone architecture, optimizer, learning rate 
schedule, temperature, number of clustering heads, Sinkhorn iterations, 
and concept alignment threshold. Dataset splits are described in Appendix \ref{app:dataset-stats}.}

\item {\bf Experiment statistical significance}
    \item[] Question: Does the paper report error bars suitably and correctly defined or other appropriate information about the statistical significance of the experiments?
    \item[] Answer: \answerYes{} 
    \item[] Justification: {All main results in Table\ref{tab:clustering_results} and Table \ref{tab:clustering_metrics} report mean ± standard deviation across 3 runs, capturing variability due to random initialization. 
Error bars are reported consistently across all datasets and evaluation protocols.}

\item {\bf Experiments compute resources}
    \item[] Question: For each experiment, does the paper provide sufficient information on the computer resources (type of compute workers, memory, time of execution) needed to reproduce the experiments?
    \item[] Answer: \answerYes{} 
    \item[] Justification: {In the appendix \ref{app:exp-resource} section of this paper, we have clearly outlined the hardware configuration used to run the experiments for the proposed methods.}
    
\item {\bf Code of ethics}
    \item[] Question: Does the research conducted in the paper conform, in every respect, with the NeurIPS Code of Ethics \url{https://neurips.cc/public/EthicsGuidelines}?
    \item[] Answer: \answerYes{} 
    \item[] Justification: {The research uses publicly available datasets and does not 
involve human subjects, sensitive data collection, or potential for direct harm. 
The work has been conducted in accordance with the NeurIPS Code of Ethics.}

\item {\bf Broader impacts}
    \item[] Question: Does the paper discuss both potential positive societal impacts and negative societal impacts of the work performed?
    \item[] Answer: \answerYes{} 
    \item[] Justification: {Broader impact is discussed in Appendix \ref{app:discussion}, covering positive 
applications in scientific discovery, biodiversity monitoring, and drug discovery. 
The authors do not foresee specific negative societal consequences from this work.}

\item {\bf Safeguards}
    \item[] Question: Does the paper describe safeguards that have been put in place for responsible release of data or models that have a high risk for misuse (e.g., pre-trained language models, image generators, or scraped datasets)?
    \item[] Answer: \answerNA{} 
    \item[] Justification: {The method 
performs category discovery on standard vision benchmarks and poses no 
identified risks requiring safeguards.  As such, it does not involve the release of data or models that carry a high risk of misuse, and therefore, no specific safeguards are required}

\item {\bf Licenses for existing assets}
    \item[] Question: Are the creators or original owners of assets (e.g., code, data, models), used in the paper, properly credited and are the license and terms of use explicitly mentioned and properly respected?
    \item[] Answer: \answerYes{} 
    \item[] Justification: {The datasets used in this paper (CIFAR-10, CIFAR-100, and CUB-200) are publicly available. We have ensured that the terms of use for these datasets are respected, and the corresponding citations are included in the paper.}

\item {\bf New assets}
    \item[] Question: Are new assets introduced in the paper well documented and is the documentation provided alongside the assets?
    \item[] Answer: \answerYes{} 
    \item[] Justification: {Our work does not create new assets such as datasets, However Anonymized code is submitted as supplemental material.}

\item {\bf Crowdsourcing and research with human subjects}
    \item[] Question: For crowdsourcing experiments and research with human subjects, does the paper include the full text of instructions given to participants and screenshots, if applicable, as well as details about compensation (if any)? 
    \item[] Answer: \answerNA{}{} 
    \item[] Justification: {This paper does not involve crowdsourcing or research 
with human subjects. All experiments are conducted on existing 
publicly available image classification benchmarking dataset.}

\item {\bf Institutional review board (IRB) approvals or equivalent for research with human subjects}
    \item[] Question: Does the paper describe potential risks incurred by study participants, whether such risks were disclosed to the subjects, and whether Institutional Review Board (IRB) approvals (or an equivalent approval/review based on the requirements of your country or institution) were obtained?
    \item[] Answer: \answerNA{}{} 
    \item[] Justification: {Our Work does not involve research with human subjects, and therefore, IRB approval or equivalent was not required.}

\item {\bf Declaration of LLM usage}
    \item[] Question: Does the paper describe the usage of LLMs if it is an important, original, or non-standard component of the core methods in this research? Note that if the LLM is used only for writing, editing, or formatting purposes and does \emph{not} impact the core methodology, scientific rigor, or originality of the research, declaration is not required.
    \item[] Answer: \answerNA{} 
    \item[] Justification:{LLMs are not an important, original, or non-standard 
component of the proposed method. Concept set generation follows the 
established pipeline of \citet{oikarinen2023label}, which uses GPT API 
as a standard component of prior work and not a novel contribution 
of this paper. Any additional LLM usage was limited to writing 
assistance (grammar and editing) and did not impact the scientific 
rigor or originality of the research.}

\end{enumerate}

\end{document}